\documentclass{article}

\PassOptionsToPackage{numbers, compress}{natbib}

\usepackage[preprint]{staix_2026}

\usepackage[utf8]{inputenc}
\usepackage[T1]{fontenc}
\usepackage{hyperref}
\usepackage{url}
\usepackage{booktabs}
\usepackage{amsfonts}
\usepackage{nicefrac}
\usepackage{microtype}
\usepackage{graphicx}
\usepackage{amsmath, amssymb, amsthm, mathtools}
\usepackage{bm}
\usepackage{multirow}
\usepackage{enumitem}
\usepackage{algorithm}
\usepackage{algpseudocode}
\usepackage{graphicx}
\usepackage{multirow}
\usepackage{booktabs}
\usepackage{siunitx}
\usepackage{makecell}
\usepackage[table]{xcolor}
\definecolor{lightgreen}{RGB}{220,245,220}
\definecolor{askgreen}{RGB}{221,245,224}   
\definecolor{skipgray}{RGB}{238,238,238}   
\definecolor{missred}{RGB}{255,230,230}    
\definecolor{obsblue}{RGB}{225,240,255}    
\definecolor{warnyellow}{RGB}{255,245,204} 
\DeclareMathOperator*{\argmin}{arg\,min}

\usepackage{titlesec}
\titlespacing*{\section}{0pt}{0.8ex plus 0.2ex minus 0.1ex}{0.4ex plus 0.1ex}
\titlespacing*{\subsection}{0pt}{0.6ex plus 0.2ex minus 0.1ex}{0.3ex plus 0.1ex}
\titlespacing*{\subsubsection}{0pt}{0.5ex plus 0.1ex minus 0.1ex}{0.2ex plus 0.1ex}
\titlespacing*{\paragraph}{0pt}{0.4ex plus 0.1ex minus 0.1ex}{0.5em}

\newcommand{\R}{\mathbb{R}}
\newcommand{\ind}{\mathbb{I}}
\newcommand{\E}{\mathbb{E}}

\newcommand{\argmax}{\mathrm{argmax}}

\newtheorem{proposition}{Proposition}

\title{TabSODA: Tabular Diffusion based Imputation with\\
Skip Pattern Detection and Ordinal Awareness}

\author{%
  Yuyu Chen \\
  Department of Biostatistics \\
  NYU School of Global Public Health \\
  \texttt{yuyu.chen@nyu.edu}
  \And
  Taehyo Kim \\
  Department of Biostatistics \\
  NYU School of Global Public Health \\
  \texttt{tk2737@nyu.edu}
  \And
  Hai Shu \\
  Department of Biostatistics \\
  NYU School of Global Public Health \\
  \texttt{hs120@nyu.edu}
  \And
  Yang Feng \\
  Department of Biostatistics \\
  NYU School of Global Public Health \\
  \texttt{yang.feng@nyu.edu}
}

\begin{document}
\maketitle

\begin{abstract}
Missing data imputation in large-scale surveys faces two challenges that are not well handled by current tabular diffusion methods. First, \emph{structural skips}, cells made inapplicable by questionnaire design, should not be imputed but are often conflated with item nonresponse. Second, \emph{ordinal} responses encode ordered categories, yet most pipelines treat them as nominal levels through one-hot or analog-bit encodings. We introduce \textbf{TabSODA} (\textbf{Tab}ular diffusion with \textbf{S}kip pattern detection and \textbf{O}r\textbf{d}inal \textbf{A}wareness), an Expectation-Maximization (EM)-based diffusion imputer built on the Elucidated Diffusion Model (EDM) framework. TabSODA propagates structural skips through the denoising loss and reverse-time sampler, and represents ordinal variables with cumulative-probit scalar latents while retaining analog-bit encodings for nominal variables. When a codebook skip mask is available, TabSODA uses it directly; otherwise, the TabSODA+SKIP variant estimates the mask from raw responses and questionnaire order using a CART-based skip-pattern miner. On Population Assessment of Tobacco and Health (PATH) study and the National Survey on Drug Use and Health (NSDUH), two nationally representative U.S.\ surveys, TabSODA reduces ordinal MACE by up to $23.7\%$ and improves categorical accuracy by up to $9\%$ over the strongest baseline across MCAR, MAR, and MNAR masking. The skip miner achieves near-perfect precision on both datasets, allowing TabSODA+SKIP to closely track the codebook-mask variant. The code is available at \href{https://anonymous.4open.science/r/TabSODA-F3C4}{https://anonymous/TabSODA}.
\end{abstract}


\section{Introduction}
\label{sec:intro}

\vspace{-0.2cm} 

Missing data are common in surveys for health and social-science studies, and how they are handled can affect the quality of regression estimates, prevalence estimates, uncertainty intervals, and downstream prediction performance~\cite{rubin1976,littlerubin2002,carpenter2021missing}. Therefore, accurate imputation that accounts for the data-generating process, the measurement scale of each variable, and the reason a cell is empty is necessary. 

Imputation methods broadly fall into conditional and joint generative approaches. Conditional methods predict each missing entry from observed covariates, as in MICE~\cite{vanbuuren2011mice}, MissForest~\cite{stekhoven2012missforest}, HyperImpute~\cite{jarrett2022hyperimpute}, the masked autoencoder imputer ReMasker~\cite{du2024remasker}, and the graph-based imputer GRAPE~\cite{you2020grape}. Joint generative methods instead model the full-data distribution and sample missing entries conditional on observed cells, using Generative Adversarial Networks (GANs)~\cite{yoon2018gain}, Variational Autoencoders (VAEs)~\cite{mattei2019miwae}, normalizing flows~\cite{richardson2020mcflow}, or score-based diffusion models~\cite{song2021scoresde}. Recent diffusion-based imputers follow this second view. DiffPuter~\cite{zhang2025diffputer} introduces an Expectation-Maximization (EM)-style alternation between complete-data density estimation and conditional reverse diffusion that we adopt as our backbone.

Despite this progress, survey imputation still faces two gaps. First, large survey programs such as the Population Assessment of Tobacco and Health (PATH) study~\cite{icpsrpathpuf} and the National Survey on Drug Use and Health (NSDUH)~\cite{samhsansduhdata} contain item nonresponse, respondent breakoff, and \emph{structural skips}: blank cells induced by questionnaire routing, such as tobacco-frequency follow-ups for respondents who report never using tobacco. Treating all blanks as imputable can produce out-of-domain values, distort item denominators, and violate skip logic. Second, many survey variables are ordinal, including Likert scales, frequency groups, symptom scales, and severity ratings. Generic tabular pipelines often encode these columns as one-hot or analog-bit categorical variables, treating ordered categories as nominal labels and losing adjacency information~\cite{chen2023analogbits}. In contrast, cumulative-link models~\cite{mccullagh1980,christensen2025clm}, deep cumulative-link classifiers~\cite{vargas2020clm}, and rank-consistent ordinal networks~\cite{cao2020coral,nxumalo2025ordinal} use ordered thresholds or monotone targets and improve over nominal classifiers on ordered-label~tasks.

We propose \textbf{TabSODA} (\textbf{Tab}ular diffusion with \textbf{S}kip pattern detection and \textbf{O}r\textbf{d}inal \textbf{A}wareness), a skip-aware and ordinal-aware diffusion imputer for mixed-type survey data. TabSODA builds on the EM diffusion backbone of DiffPuter \cite{zhang2025diffputer}, but changes the data encoding-decoding scheme,  denoising loss, and reverse-time sampler to preserve questionnaire structure and ordinal scale during training and imputation.  The proposed method aims to recover much of the survey-design information available from a codebook skip mask while requiring only the raw response table and questionnaire order. Our contributions are summarized as follows. First, TabSODA integrates structural skips into EM-based diffusion imputation by propagating skip states through the denoising loss, EDM input, and reverse-time sampler, rather than treating skip handling as a preprocessing step. Second, it introduces a decision tree-based skip-pattern miner that estimates three-state masks from raw missingness patterns and questionnaire order, along with a codebook reference track for evaluating skip-mask quality; this reduces reliance on hand-coded skip rules when codebooks are incomplete, unavailable, or difficult to use. Third, it introduces an ordinal-aware mixed-type encoding in which each ordered categorical column is represented by a cumulative-probit scalar latent variable that preserves category adjacency, with an analog-bit fallback for columns dominated by a single category.


\section{Related Work}
\label{sec:related}
\vspace{-0.2cm} 

\paragraph{Missing data imputation.}
Conditional imputers predict missing cells from observed covariates: MICE~\cite{vanbuuren2011mice} fits chained regressions and MissForest~\cite{stekhoven2012missforest} uses iterative random forests. Joint generative methods instead estimate the full-data distribution and impute by conditioning on the observed values. GAIN~\cite{yoon2018gain} uses adversarial training, MIWAE~\cite{mattei2019miwae} and HI-VAE~\cite{nazabal2020hivae} use variational latent-variable models, and CSDI~\cite{tashiro2021csdi}, TabCSDI~\cite{zheng2022tabcsdi}, MissDiff~\cite{ouyang2025missdiff}, TabDiff~\cite{shi2024tabdiff}, and TabSyn~\cite{zhang2023tabsyn} learn conditional or joint score models for imputation or synthesis. DiffPuter~\cite{zhang2025diffputer} alternates between diffusion-based density estimation on the current completed table and conditional resampling of missing cells, making it the direct methodological baseline for TabSODA. These methods typically use a binary observed/missing mask and standard categorical encodings, so they do not distinguish ordinary nonresponse from structural nonapplicability.

\paragraph{Representation of skip patterns.}
Survey methodology has long recognized the distinction between skip-induced blank cells and ordinary item nonresponse. Graph-theoretic methods~\cite{fagan1988graph} represent a questionnaire as a routing graph that separates valid responses, nonapplicable cells, imputable missing cells, and unresolved cells, and later extensions handle clinical survey data~\cite{arslanturk2012skip,arslanturk2016survey}. Data-mining methods~\cite{wang2010classification} learn questionnaire rules or informative missingness patterns directly from observed data. Survey-imputation studies have shown that variables defined by skip patterns require separate treatment from item nonresponse~\cite{judkins2007skippreservation,zhang2023skippatternmi}. TabSODA brings this distinction inside a generative imputer. It mines a structural skip mask from raw missingness and questionnaire order, then passes that mask to the diffusion denoising loss and sampler.

\paragraph{Modeling of ordinal variables.}
Ordinal variables are common in survey instruments, particularly Likert scales, frequency bands, symptom scales, and severity ratings, where between-category ordering carries information that nominal classifiers discard. The proportional-odds cumulative-link model of McCullagh~\cite{mccullagh1980} and its modern treatments~\cite{christensen2025clm} represent ordered categories through a thresholded latent score, providing a likelihood that respects category adjacency. Deep extensions train neural networks under cumulative-link likelihoods~\cite{vargas2020clm} or impose rank-consistent classifier ensembles~\cite{cao2020coral,nxumalo2025ordinal}, both of which empirically outperform plain softmax classifiers on ordered-label tasks. TabSODA imports the cumulative-probit observation model into a diffusion imputer. Each ordinal column is represented by a single scalar latent that the denoiser predicts, and a per-column cutpoint vector decodes the prediction into category probabilities, so the diffusion process operates in a space where adjacent ordinal categories are closer than non-adjacent ones.

\begin{figure}[t]
\centering
\includegraphics[width=\textwidth,trim=0 8.18in 0 0,clip]{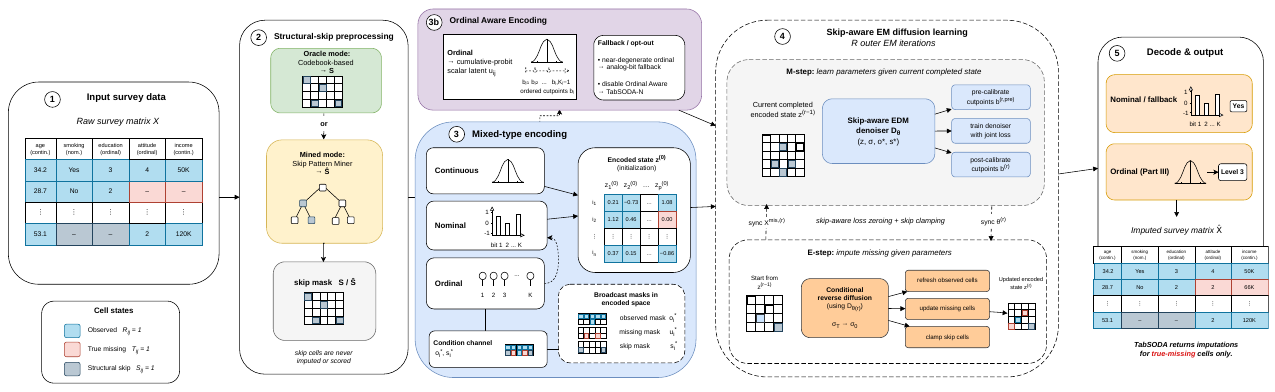}
\vspace{-1.2em}
\caption{An overview of the proposed TabSODA architecture. Structural-skip mask preprocessing uses either a codebook-provided mask or a learned skip-pattern miner, followed by a mixed-type encoding trained by a skip-aware EM diffusion loop. Ordinal columns route through a cumulative-probit scalar latent with blockwise cutpoint calibration with hybrid analog-bit fallback.}
\label{fig:pipeline}
\end{figure}

\vspace{0.2cm} 

\section{Method}
\label{sec:method}

\subsection{Problem Formulation}
\label{sec:method:problem}

\vspace{-0.2cm} 
Consider a survey data table $\mathbf{X}=(x_{ij}) \in \mathbb{R}^{N \times d}$ with \(N\) respondents and \(d\) variables, where \(x_{ij}\) is respondent \(i\)'s response to question \(j\) when observed. Variables may be continuous, nominal, or ordinal, with index sets \(\mathcal J_c\), \(\mathcal J_n\), and \(\mathcal J_o\), respectively. For an ordinal variable
with index \(j\in\mathcal J_o\), categories are ordered as \(1<\cdots<K_j\), as illustrated in Table~\ref{tab:ordinal_example}. Because ordinal levels encode relative severity or intensity, TabSODA preserves their ordered structure during imputation rather than treating them as nominal categories.

A central challenge in survey imputation is that questionnaire routing can make blank cells structurally inapplicable rather than missing. For example, in PATH~\cite{icpsrpathpuf}, answering \texttt{No} to ``Have you ever smoked a cigarette, even one or two puffs?'' (\texttt{AC1002}) skips downstream smoking follow-ups, producing a block of structural blanks; Table~\ref{tab:structural_skip_example} in Appendix~\ref{app:skip_mining} illustrates this pattern. Imputing such cells inflates missingness and can produce invalid responses, so TabSODA treats structural skips as non-imputable. Let \(R,T,S\in\{0,1\}^{N\times d}\) denote the observed, true-missing, and structural-skip masks, with entries \(R_{ij}\), \(T_{ij}\), and \(S_{ij}\). These masks are mutually exclusive and exhaustive, so \(R_{ij}+T_{ij}+S_{ij}=1\) for all \((i,j)\). Cells with \(T_{ij}=1\) are imputation targets, whereas cells with \(S_{ij}=1\) remain fixed as inapplicable. When codebook routing is available, \(S\) is obtained directly; otherwise, TabSODA uses an estimated mask \(\widehat S\) learned from raw missingness and questionnaire order.

The remainder of this section describes the components of TabSODA. Section~\ref{sec:method:skip_learning} presents the decision tree-based skip-pattern miner for estimating $\widehat{S}$. Section~\ref{sec:method:ordinalencoding} defines the ordinal-aware mixed-type representation used by the diffusion model. Section~\ref{sec:method:diffusion_em} combines the above modules to introduce a skip-pattern informed diffusion imputation model based on EM. 

\begin{table}[h]
\centering
\small
\caption{An example of an ordinal variable (AG1005FC from PATH~\cite{icpsrpathpuf}). Quantity of filtered cigars ever smoked is recorded using ordered categories, larger values indicating higher~consumption.}
\label{tab:ordinal_example}
\renewcommand{\arraystretch}{1.15}
\resizebox{\columnwidth}{!}{%
\begin{tabular}{ccccccc}
\toprule
Category & 1 & 2 & 3 & 4 & 5 & 6 \\
\midrule
Meaning & 
less than 1 &
1-10 cigars &
11-20 cigars &
21-50 cigars &
51-99 cigars &
100 or more \\
\bottomrule
\end{tabular}%
}
\end{table}

\subsection{Skip Pattern Mining based on Decision Trees}
\label{sec:method:skip_learning}

\vspace{-0.2cm} 

\begingroup
\setlength{\abovedisplayskip}{4pt}
\setlength{\belowdisplayskip}{4pt}
\setlength{\abovedisplayshortskip}{2pt}
\setlength{\belowdisplayshortskip}{2pt}

When the codebook skip mask \(S\) is unavailable, TabSODA uses the Classification and Regression Trees (CART)~\cite{breiman1984cart} to estimate it from the raw response table and the questionnaire order. Figure~\ref{fig:path50_cigarette_skip_cascade} illustrates an example of why learned skip states are
needed. In the PATH dataset, \texttt{AC1002} asks whether the respondent has ever smoked a
cigarette. If the response is \texttt{No}, \texttt{AC1003} is structurally
skipped. A later recency item, \texttt{AC0100}, can then be skipped either
because \texttt{AC1003} = \texttt{Not at all} or because \texttt{AC1003} was
already skipped. The latter case is not recoverable from the raw value of
\texttt{AC1003}; it requires the learned state feature
\texttt{AC1003\_STATE}.

Let \(\widehat S\in\{0,1\}^{N\times d}\) denote the mined skip mask, where its entry \(\widehat S_{it}=1\) indicates that respondent \(i\)'s cell for item \(q_t\) is predicted to be structurally skipped. \(q_t\) is the survey item at position \(t\) in the questionnaire order. The miner separates raw blank cells into structurally inapplicable cells, which are excluded from imputation, and unresolved missing cells, which remain imputation targets. Let \(q_1,\ldots,q_d\) be the questionnaire-ordered target variables after respondent identifiers are removed, and let \(x_{it}\) be respondent \(i\)'s raw entry for item \(q_t\). The initial and final states are
\begin{equation}\label{eq:skip_states}
  \mathrm{state}^{(0)}_{it}=\mathbb I\{x_{it}\text{ is raw-missing}\},\qquad
  \mathrm{state}_{it}=\begin{cases}0, & x_{it}\text{ is observed},\\ 1+\widehat S_{it}, & x_{it}\text{ is raw-missing}.\end{cases}
\end{equation}
Here \(\widehat S_{it}=1\) only for raw-missing cells covered by accepted skip rules. Thus, \(\mathrm{state}_{it}=1\) denotes unresolved missingness and \(\mathrm{state}_{it}=2\) denotes a mined structural skip.

Targets are processed in questionnaire order. For target \(q_t\), predictors use only earlier items \(q_g\), \(g<t\): categorical encodings of earlier raw responses, raw-missingness indicators, and learned earlier states \(\mathrm{state}_{ig}\in\{0,1,2\}\). This forward-only construction prevents later variables from explaining earlier skips while allowing learned skip states to propagate through skip cascades. For each target $q_t$, the CART label is raw missingness $y_{it}=\mathbb I\{x_{it}\text{ is raw-missing}\}=\mathbb I\{\mathrm{state}^{(0)}_{it}=1\}$, where $\mathbb I\{\cdot\}$ is the indicator function. Since $S_{it}$ is not observed during mining, $y_{it}=1$ means only that the target cell is blank: it may be a structural skip, item nonresponse, breakoff, or another unresolved missing value. The CART thus serves as a high-confidence rule generator rather than a classifier that converts all blank cells to skips.

A candidate CART rule \(r\) induces a prediction mask \(r_{it}\in\{0,1\}\), where \(r_{it}=1\) means respondent \(i\) satisfies the rule path for target \(q_t\); a schematic of the one-target learning flow is given in Figure~\ref{fig:skip_detection_flow} of Appendix~\ref{app:skip_mining}. For scoring rows \(\mathcal I_t\), the rule produces the $2\times 2$ confusion counts
\[
\setlength{\arraycolsep}{3pt}
\renewcommand{\arraystretch}{1.0}
\begin{array}{c|cc}
 & y_{it}=1 & y_{it}=0 \\
\hline
r_{it}=1 & \mathrm{TP}_t & \mathrm{FP}_t \\
r_{it}=0 & \mathrm{FN}_t & \mathrm{TN}_t
\end{array},
\]
with the number of real positives $n_+(t)=\mathrm{TP}_t+\mathrm{FN}_t$, the number of real negative $n_-(t)=\mathrm{FP}_t+\mathrm{TN}_t$, and empirical rates $\widehat{\mathrm{FPR}}_t=\mathrm{FP}_t/n_-(t)$ and $\widehat{\mathrm{FNR}}_t=\mathrm{FN}_t/n_+(t)$. The false-positive rate controls rule firing on nonblank target cells, while the false-negative rate controls missed raw blanks; targets with \(n_+(t)=0\) or \(n_-(t)=0\) are excluded from rule acceptance. Rather than screening by raw proportions, TabSODA uses Wilson upper confidence bounds, denoted \(\mathrm{UCB}(e,n)\), for an error count \(e\) among \(n\) relevant rows. A candidate rule is eligible only if
\begin{equation}\label{eq:skip_rule_screen_main}
  \mathrm{UCB}\{\mathrm{FP}_t,n_-(t)\}\le f_{\mathrm{FPR}}^{(t)}\quad\text{and}\quad \mathrm{UCB}\{\mathrm{FN}_t,n_+(t)\}\le f_{\mathrm{FNR,cap}}^{(t)}.
\end{equation}
The cap \(f_{\mathrm{FPR}}^{(t)}\) limits firing on nonblank rows, and \(f_{\mathrm{FNR,cap}}^{(t)}\) limits missed raw blanks with a small correction for sparse positive classes. Their position-adjusted definitions and the exact Wilson formula are given in Appendix~\ref{app:skip_mining}. Rules passing \eqref{eq:skip_rule_screen_main} are then tested for association with \(y_t\) using Fisher's exact test~\cite{fisher1934exact} on the same \(2\times2\) table, and Benjamini-Hochberg FDR control~\cite{benjamini1995fdr} is applied to the resulting \(p\)-values. Let \(\mathcal R_t\) be the accepted rule set for target $q_t$; the mined skip estimate is $\widehat S_{it}=\mathbb I\{y_{it}=1\text{ and }\exists\, r\in\mathcal R_t\text{ with }r_{it}=1\}$. Thus, only raw blank cells covered by accepted rules are promoted to structural skips; observed cells are never overwritten, and uncovered raw blanks remain unresolved missing values.
\endgroup

\begin{figure}[t]
\centering
\small
\setlength{\fboxsep}{5pt}
\renewcommand{\arraystretch}{1.2}
\resizebox{\columnwidth}{!}{%
\begin{tabular}{c@{\qquad}c}
\multicolumn{1}{c}{\textbf{Skip Rule \#1: target \texttt{AC1003}}}
&
\multicolumn{1}{c}{\textbf{Skip Rule \#2: target \texttt{AC0100}}}
\\[4pt]

\begin{tabular}{c}
\fbox{\parbox{0.26\textwidth}{\centering \texttt{AC1002}: ever smoked?}} \\[3pt]
\begin{tabular}{c c c}
$\swarrow$ & & $\searrow$ \\
\texttt{No} & & \texttt{Yes} \\
\fbox{\parbox{0.16\textwidth}{\centering \texttt{AC1003}\\structural skip}} 
& &
\fbox{\parbox{0.16\textwidth}{\centering \texttt{AC1003}\\eligible}}
\end{tabular}
\\[5pt]
\textit{learned state \texttt{AC1003\_STATE} is passed forward}
\end{tabular}
&
\begin{tabular}{c}
\fbox{\parbox{0.34\textwidth}{\centering
\texttt{AC1003}: current smoking\\
or learned \texttt{AC1003\_STATE}}} \\[3pt]
\begin{tabular}{c c c}
$\swarrow$ & & $\searrow$ \\
\begin{tabular}{c}
\texttt{AC1003}=\texttt{Not at all}\\
or \texttt{AC1003\_STATE}=skip
\end{tabular}
& &
otherwise \\
\fbox{\parbox{0.17\textwidth}{\centering \texttt{AC0100}\\structural skip}}
& &
\fbox{\parbox{0.17\textwidth}{\centering \texttt{AC0100}\\eligible}}
\end{tabular}
\end{tabular}
\end{tabular}%
}
\caption{Two-step cigarette-use skip cascade learned by CART on PATH dataset. The second rule depends on both a raw response and the learned skip state of an earlier item.}
\label{fig:path50_cigarette_skip_cascade}
\end{figure}

\vspace{-0.2cm} 

\subsection{Ordinal-Aware Mixed-Type Encoding}
\label{sec:method:ordinalencoding}

\vspace{-0.2cm} 

We introduce an \emph{ordinal-aware} mixed-type encoding that maps each respondent's survey record to an initial embedding $\mathbf z_i^{(0)}\in\mathbb R^p$, where $p$ is the embedding dimension, used by the diffusion-based imputation model in Section~\ref{sec:method:diffusion_em}. Continuous variables indexed by \(j\in\mathcal J_c\) 
are standardized as $\tilde{x}_{ij}=(x_{ij}-\mu_j)/\sigma_j$,  where $\mu_j$ is the sample mean and $\sigma_j$ the sample standard deviation. Nominal variables indexed by \(j\in\mathcal J_n\) with \(K_j\) categories use analog-bit codewords~\cite{chen2023analogbits}: category \(k\in\{1,\ldots,K_j\}\) is mapped to the \(B_j\)-bit representation of \(k-1\),
$q_j(k)\in\{0,1\}^{B_j}$, where $B_j=\max\{1,\lceil\log_2K_j\rceil\}$. We write the encoded value of \(x_{ij}\) as \(q_j(x_{ij})\).

Ordinal variables  use a cumulative-probit scalar encoding
when their empirical category distribution is sufficiently stable. For  an ordinal variable indexed by \(j\in\mathcal J_o\), let $-\infty=b_{j,0}<b_{j,1}<\cdots<b_{j,k-1}<b_{j,k}<\cdots<b_{j,K_j}=+\infty$ define ordered cutpoints, so category $k\in\{1,\ldots,K_j\}$ corresponds to the interval $(b_{j,k-1},b_{j,k}]$. An observed $x_{ij}=k$ is represented by a scalar latent value $u_{ij}\in(b_{j,k-1},b_{j,k}]$, written as $u_{ij}=c_j(x_{ij})$. Given an imputed latent value $\hat u_{ij}$, the category probabilities are
\begin{equation}
\label{eq:probit_decoder}
P(x_{ij}=k\mid \hat u_{ij})
=
\Phi(b_{j,k}-\hat u_{ij})
-
\Phi(b_{j,k-1}-\hat u_{ij}),
\end{equation}
where $\Phi$ is the standard Gaussian cumulative distribution function (CDF). However, cumulative probit cutpoints can be unstable when observed responses are highly concentrated in a single category. To detect this case, we compute the empirical non-skip category proportions $\widehat p_{jk}$ for each ordinal variable indexed by $j$ and define $d_j=\max_k\widehat p_{jk}$ as the dominant-category proportion. We then route ordinal variables according to
\begin{equation}
\label{eq:routing}
\operatorname{enc}_j(x_{ij})=
\begin{cases}
q_j(x_{ij}), & d_j\geq \tau_{\mathrm{route}},\\
c_j(x_{ij}), & d_j< \tau_{\mathrm{route}},
\end{cases}
\qquad \text{with}~~
\tau_{\mathrm{route}}=0.70.
\end{equation}
Thus, ordinal variables with at least $70\%$ of observed non-skip entries in one category are encoded by analog bits; the rest use the cumulative probit encoder.
Let
\[
\mathcal J_o^{\mathrm{fb}}
=
\{j\in\mathcal J_o:d_j\geq\tau_{\mathrm{route}}\},
\qquad
\mathcal J_o^{\mathrm{sc}}
=
\{j\in\mathcal J_o:d_j<\tau_{\mathrm{route}}\},
\]
where $\mathcal J_o^{\mathrm{fb}}$ denotes the index set of fallback analog-bit ordinal
variables, and $\mathcal J_o^{\mathrm{sc}}$ denotes the index set  of scalar cumulative-probit
ordinal variables. The initial embedding for respondent $i$ concatenates the transformed blocks across the continuous, nominal, and ordinal variables:
\begin{equation}
\label{eq:encoding}
\mathbf z_i^{(0)}
=
\bigl(
\{\tilde{x}_{ij}:j\in\mathcal J_c\},
\{q_j(x_{ij}):j\in\mathcal J_n\cup\mathcal J_o^{\mathrm{fb}}\},
\{u_{ij}:j\in\mathcal J_o^{\mathrm{sc}}\}
\bigr).
\end{equation}
The corresponding embedding dimension is
\begin{equation}
\label{eq:total-encoding-dim}
p
=
|\mathcal J_c|
+
|\mathcal J_o^{\mathrm{sc}}| +
\sum_{j\in\mathcal J_n\cup\mathcal J_o^{\mathrm{fb}}} B_j
.
\end{equation}
Stacking the initial respondent embeddings gives $\mathbf Z^{(0)}=(\mathbf z_1^{(0)},\ldots,\mathbf z_N^{(0)})^\top\in\mathbb R^{N\times p}$. Appendix~\ref{app:ord_loss}
gives the remaining encoding details, including cutpoint definitions.

\subsection{Skip- and Ordinal-Informed Diffusion Imputation via Expectation-Maximization}
\label{sec:method:diffusion_em}

\vspace{-0.1cm} 

TabSODA inherits the EM-based diffusion strategy of DiffPuter~\cite{zhang2025diffputer} but applies it to the initial encoded survey matrix $\mathbf Z^{(0)}\in\mathbb R^{N\times p}$ and modifies training and sampling to retain structural skips and ordinal structure: while DiffPuter merges $S$ and $T$ into a single missing state, TabSODA keeps $S_{ij}=1$ as a fixed inapplicability state throughout preprocessing, training, sampling, and evaluation (see the full procedure in Algorithm~\ref{alg:tabsoda}). At EM iteration $r$, $\mathbf Z^{(r)}=(\mathbf z_1^{(r)},\ldots,\mathbf z_N^{(r)})^\top\in\mathbb R^{N\times p}$ denotes the current completed encoded table, initialized from $\mathbf Z^{(0)}$ in Section~\ref{sec:method:ordinalencoding}. We use $O$ for the training-visible mask and $M$ for the applicable missing mask, with $O=R$ and $M=T$ at deployment, and with $O_{ij}=R_{ij}(1-A_{ij})$ and  $M_{ij}=T_{ij}+R_{ij}A_{ij}$ under artificial hold-out mask $A$ for evaluation; structural-skip cells are excluded from both. Because diffusion operates in the encoded space, survey-level masks are broadcast from $d$ to $p$ dimensions, giving per-respondent encoded masks $\mathbf o_i^\star,\mathbf m_i^\star,\mathbf s_i^\star\in\{0,1\}^p$ for training-visible observed entries, applicable missing entries, and structural skips, with $\mathbf w_i^\star=\mathbf 1-\mathbf s_i^\star$ marking non-skip entries.

Let $\mathbf z_{i,\mathrm{obs}}$ and $\mathbf z_{i,\mathrm{mis}}$ denote entries indexed by $\mathbf o_i^\star=1$ and $\mathbf m_i^\star=1$; entries with $\mathbf s_i^\star=1$ are excluded from the complete-data likelihood $p_\theta(\mathbf z_i\mid\mathbf s_i^\star)$. The ideal E-step updates missing entries by the conditional expectation $\mathbb E[\mathbf z_{i,\mathrm{mis}}\mid\mathbf z_{i,\mathrm{obs}};\theta^{(r)}]$, and the M-step updates $\theta^{(r+1)}$ by maximizing the resulting complete-data log-likelihood (see equations in Appendix~\ref{app:heun})~\cite{zhang2025diffputer}. TabSODA approximates the E-step with conditional reverse diffusion and the M-step with EDM denoising regression. In the encoded space, TabSODA is a variance-exploding score-based diffusion model~\cite{song2021scoresde} with forward kernel $\mathbf z_t\mid\mathbf z_0\sim\mathcal N(\mathbf z_0,\sigma(t)^2 I)$, trained by denoising score matching~\cite{vincent2011dsm}. Under the EDM parameterization~\cite{karras2022edm}, $D_\theta(\mathbf z_t,\sigma)$ estimates the clean encoded vector $\mathbf z_0$ from its noisy version, with score estimate $s_\theta(\mathbf z_t,\sigma)=\{D_\theta(\mathbf z_t,\sigma)-\mathbf z_t\}/\sigma^2$.

\paragraph{E-step.}
Given \(\theta^{(r)}\), applicable missing entries are updated conditional on observed entries. At reverse step \(\tau_m\to\tau_{m+1}\), observed entries are refreshed by the forward Gaussian kernel, applicable missing entries are updated by one reverse-sampler step, and structural skips are clamped:
\begin{align}
\mathbf z_{i,\mathrm{obs}}^{(\tau_{m+1})}
&=
\mathbf z_i^{(r)}+\tau_{m+1}\boldsymbol\xi_i,
\qquad
\boldsymbol\xi_i\sim\mathcal N(0,I_p),
\label{eq:obs_refresh}
\\
\mathbf z_{i,\mathrm{mis}}^{(\tau_{m+1})}
&=
\mathcal S_{\theta^{(r)}}\!\left(
\mathbf z_i^{(\tau_m)},\tau_m,\tau_{m+1},\mathbf o_i^\star,\mathbf s_i^\star
\right),
\label{eq:miss_update}
\\
\mathbf z_i^{(\tau_{m+1})}
&=
\mathbf w_i^\star\odot
\left[
(\mathbf 1-\mathbf m_i^\star)\odot
\mathbf z_{i,\mathrm{obs}}^{(\tau_{m+1})}
+
\mathbf m_i^\star\odot
\mathbf z_{i,\mathrm{mis}}^{(\tau_{m+1})}
\right].
\label{eq:mix}
\end{align}
Here \(\mathcal S_\theta\) denotes one Heun-discretized Langevin reverse-sampler step~\cite{song2021scoresde,karras2022edm}; Algorithm~\ref{alg:tabsoda} and Appendix~\ref{app:heun} give the details on the full algorithm, Heun-step and noise grid. DiffPuter~\cite{zhang2025diffputer} shows that the mixed forward-reverse construction samples from $p_\theta(\mathbf x_{\mathrm{mis}}\mid\mathbf x_{\mathrm{obs}})$ in the continuous-time limit. TabSODA applies the same construction in encoded space, conditions on \(\mathbf o_i^\star\) and \(\mathbf s_i^\star\), and clamps structural-skip entries at every reverse step.

\paragraph{M-step.}
Given a completed encoded table $\mathbf Z^{(r)}$, the M-step fits the EDM denoiser. For noise level $\sigma_i$ and $\boldsymbol\epsilon_i\sim\mathcal N(0,I_p)$, the skip-aware noisy input is
$\mathbf z_i^{(\sigma,r)}
=
\mathbf w_i^\star\odot
(\mathbf z_i^{(r)}+\sigma_i\boldsymbol\epsilon_i),
$
so Gaussian noise is applied only to non-skip entries. The skip-masked EDM loss is
\begin{equation}
  \label{eq:diffusion_loss}
  \mathcal{L}_{\mathrm{diff}}^{(r)}
  =
  \mathbb E_{i,\sigma,\boldsymbol\epsilon}
  \!\left[
  \frac{1}{\sum_m w_{im}^\star}
  \sum_m
  w_{im}^\star
  \omega(\sigma_i)
  \left\{
  D_{\theta,m}
  \left(
  \mathbf z_i^{(\sigma,r)},\sigma_i,\mathbf o_i^\star,\mathbf s_i^\star
  \right)
  -
  z_{im}^{(r)}
  \right\}^2
  \right].
\end{equation}
where \(m\) indexes encoded entries and
\(\omega(\sigma_i)=(\sigma_i^2+\sigma_{\mathrm{data}}^2)/(\sigma_i\sigma_{\mathrm{data}})^2\) is the EDM weighting function~\cite{karras2022edm}. Relative to DiffPuter~\cite{zhang2025diffputer}, Eq.~\eqref{eq:diffusion_loss} assigns zero reconstruction weight to structural skips and conditions the denoiser on \(\mathbf o_i^\star\) and \(\mathbf s_i^\star\).

TabSODA adds two low-noise decoding terms. For an ordinal variable indexed by  $j\in\mathcal J_o^{\mathrm{sc}}$, the denoiser output is decoded through Eq.~\eqref{eq:probit_decoder}. Let $\Omega_j=\{i:O_{ij}=1\}$ denote training-visible observed cells for the $j$-th variable. We minimize a cumulative-link negative log likelihood~\cite{mccullagh1980,christensen2025clm} plus a Ranked Probability Score (RPS) term~\cite{epstein1969rps}:
\begin{equation}
\label{eq:scalar_loss}
\mathcal L_{\mathrm{ord},j}
=
\frac{1}{|\Omega_j|}
\sum_{i\in\Omega_j}
\left[
\ell_{\mathrm{NLL},ij}
+
\lambda_{\mathrm{RPS}}\,\ell_{\mathrm{RPS},ij}
\right].
\end{equation}
The NLL term rewards the observed ordinal category, and the RPS term penalizes cumulative distribution error. Because scalar ordinal variables use one encoded entry while analog-bit variables use \(B_j\), scalar ordinal reconstruction uses the width-parity multiplier
\(\lambda_j^{\mathrm{width}}=\max\{1,\lceil\log_2K_j\rceil\}\). Appendix~\ref{app:ord_loss} gives the NLL, RPS, cutpoint initialization, and ordered cutpoint parameterization.

The full M-step objective is
\begin{equation}
\label{eq:tabsoda_loss}
\mathcal L_{\mathrm{TabSODA}}^{(r)}
=
\mathcal L_{\mathrm{diff}}^{(r)}
+
\lambda_{\mathrm{cont}}\mathcal L_{\mathrm{cont}}^{(r)}
+
\lambda_{\mathrm{nom}}\mathcal L_{\mathrm{nom}}^{(r)}
+
\lambda_{\mathrm{ord}}\sum_{j\in\mathcal J_o^{\mathrm{sc}}}
\mathcal L_{\mathrm{ord},j}^{(r)}.
\end{equation}
Here $\mathcal L_{\mathrm{cont}}^{(r)}$ supervises standardized continuous entries by the squared error, $\mathcal L_{\mathrm{nom}}^{(r)}$ is the prototype cross-entropy over analog-bit blocks (Appendix~\ref{app:ord_loss}), and $\mathcal L_{\mathrm{ord},j}^{(r)}$ is the cumulative-probit ordinal loss in  Eq.~\eqref{eq:scalar_loss}. After the final E-step, encoded rows are decoded to the survey scale by inverse standardization, nearest analog-bit codeword decoding, and cumulative-probit MAP decoding. Appendix~\ref{app:hyperparams} reports the auxiliary-weight grid, blockwise schedule, and decoding details.

\section{Experiments}
\label{sec:experiments}
\vspace{-0.1cm} 

\subsection{Datasets}
\label{sec:experiments:data}
\vspace{-0.2cm} 
We evaluate TabSODA on two mixed-type survey benchmarks. The Population Assessment of Tobacco and Health Study (PATH) benchmark~\cite{icpsrpathpuf} contains $N=32{,}320$ respondents and $d=57$ variables, consisting of 1 continuous, 32 nominal, and 24 ordinal variables. The National Survey on Drug Use and Health (NSDUH) benchmark~\cite{samhsansduhdata} contains $N=58{,}034$ respondents and $d=29$ variables, consisting of 6 continuous, 10 nominal, and 13 ordinal variables. These two benchmarks are used for end-to-end imputation evaluation under artificial missingness.

We also construct a separate skip-pattern evaluation set to assess the CART-based skip miner independently of imputation accuracy. This set consists of six PATH and NSDUH variable subsets comprising 50, 100, and 200 variables. These subsets are used only to compare the estimated skip mask $\widehat S$ against the codebook-provided skip mask $S$; they are not used in the end-to-end imputation experiments. Additional dataset construction and cleaning details are provided in Appendix~\ref{app:datasets}.

\subsection{Implementation Details}
\label{sec:experiments:implementation}

\vspace{-0.1cm} 

We compare TabSODA with its direct baseline, DiffPuter~\cite{zhang2025diffputer}, and five external mixed-type imputation or tabular synthesis methods: MICE~\cite{vanbuuren2011mice}, MissForest~\cite{stekhoven2012missforest}, TabCSDI~\cite{zheng2022tabcsdi}, TabDiff~\cite{shi2024tabdiff}, and TabSyn~\cite{zhang2023tabsyn}. We also evaluate three internal variants: \textbf{TabSODA}, which uses the codebook skip mask \(S\); \textbf{TabSODA\,+\,SKIP}, which uses the CART-mined skip mask \(\widehat S\); and \textbf{TabSODA-N}, which routes all categorical variables through analog-bit encoding.

All end-to-end imputation experiments use \(R=5\) outer EM iterations, \(N_{\mathrm{imp}}=20\) sampled imputations per iteration, and \(L=50\) paired replications. We set \(R=5\) and \(N_{\mathrm{imp}}=20\) following DiffPuter ablations, which report stable performance with 4--5 EM iterations and at least 10 Monte Carlo samples per iteration~\cite{zhang2025diffputer}. For each benchmark, we apply \(30\%\) artificial masking under MCAR, MAR, and MNAR, using shared seeds and masks across methods for paired comparisons. The MAR and MNAR generators use rank-based scores so that masking probabilities do not depend on arbitrary categorical codes; details are given in Appendix~\ref{app:impl:masking}. Unless stated otherwise, TabSODA uses the same  hyperparameter configuration across datasets and missingness mechanisms; Appendix~\ref{app:hyperparams} reports the denoiser, EDM schedule, M-step learning rates, freeze schedule, sampler grid, auxiliary losses, and temperature grid.

\subsection{Evaluation Metrics}
\label{sec:experiments:metrics}

\vspace{-0.2cm} 

We compute imputation accuracy only on artificially hidden observed cells, \(\{(i,j): A_{ij}^{(\ell)}=1,\ R_{ij}=1\}\); structural skips are excluded because they are not valid imputation targets. For continuous variables, we report Root Mean Square Error (RMSE) on standardized values. For categorical variables, we report overall categorical accuracy (Cat. Acc.), nominal accuracy (Nom. Acc.), and ordinal accuracy (Ord. Acc.). For ordinal variables, we also report Mean Absolute Categorical Error (MACE)~\cite{baccianella2009ordinal}, defined as the mean absolute difference between predicted and true ordinal ranks. All metrics are averaged over \(L=50\) paired replications. Appendix~\ref{app:metrics} gives the metric formulas, standardization details, evaluation sets \(\mathcal{E}_c^{(\ell)},\mathcal{E}_n^{(\ell)},\mathcal{E}_o^{(\ell)}\), and aggregation procedure.

For the skip miner, we evaluate the estimated mask $\widehat S$ against the codebook-provided mask $S$ using the cell-level precision, recall, F1, false positive rate, and accuracy. The estimated mask is fitted using raw missingness and questionnaire order only. The cell-level confusion counts and the closed-form expressions for these metrics are given in Appendix~\ref{app:metrics}.

\subsection{Results}
\label{sec:experiments:results}

\vspace{-0.1cm} 

We compare TabSODA against six baseline methods, including MICE~\cite{vanbuuren2011mice}, MissForest~\cite{stekhoven2012missforest}, TabCSDI~\cite{zheng2022tabcsdi}, DiffPuter~\cite{zhang2025diffputer}, TabSyn~\cite{zhang2023tabsyn}, and TabDiff~\cite{shi2024tabdiff}, on the PATH and NSDUH datasets under
the three artificial-masking mechanisms, MAR, MCAR, and MNAR, each at a $30\%$ target-column hide rate. Table~\ref{tab:benchmarks_mar_combined} reports the MAR results; the MCAR and MNAR results follow the same performance pattern and are deferred to Appendix~\ref{app:additional_results}. TabSODA achieves the strongest categorical and ordinal performance on both datasets, with the largest gains on the ordinal metrics: on PATH it reduces ordinal MACE by $23.7\%$ relative to the strongest baseline, and on NSDUH it likewise improves both ordinal accuracy and ordinal MACE over the strongest baseline. TabDiff is competitive with TabSODA on PATH's numeric RMSE but trails on all other categorical and ordinal metrics. The MCAR and MNAR tables in Appendix~\ref{app:additional_results} confirm the same pattern across mechanisms.

\begin{table}[t]
\centering
\caption{Imputation performance on PATH and NSDUH under $30\%$ MAR (mean $\pm$ SD).}
\label{tab:benchmarks_mar_combined}
\small
\resizebox{\textwidth}{!}{%
\begin{tabular}{llccccc}
\toprule
Dataset & Method
& Ord.\ MACE \(\downarrow\)
& Ord.\ Acc.\ \(\uparrow\)
& Cat.\ Acc.\ \(\uparrow\)
& Nom.\ Acc.\ \(\uparrow\)
& Num.\ RMSE \(\downarrow\) \\
\midrule
\multirow{7}{*}{PATH}
& MICE        & \(0.757 \pm 0.099\) & \(0.564 \pm 0.037\) & \(0.744 \pm 0.020\) & \(0.820 \pm 0.014\) & \(15.102 \pm 3.778\) \\
& MissForest  & \(0.596 \pm 0.083\) & \(0.658 \pm 0.031\) & \(0.811 \pm 0.017\) & \(0.876 \pm 0.012\) & \(15.623 \pm 3.687\) \\
& TabCSDI  & \(1.154 \pm 0.169\) & \(0.443 \pm 0.059\) & \(0.653 \pm 0.025\) & \(0.743 \pm 0.026\) & \(42.579 \pm 23.546\) \\
& DiffPuter   & \(0.683 \pm 0.121\) & \(0.622 \pm 0.044\) & \(0.794 \pm 0.026\) & \(0.869 \pm 0.019\) & \(15.808 \pm 3.370\) \\
& TabSyn      & \(0.677 \pm 0.134\) & \(0.578 \pm 0.039\) & \(0.786 \pm 0.028\) & \(0.873 \pm 0.028\) & \(17.430 \pm 5.460\) \\
& TabDiff     & \(0.720 \pm 0.102\) & \(0.579 \pm 0.007\) & \(0.775 \pm 0.014\) & \(0.845 \pm 0.012\) & \(\mathbf{13.016 \pm 4.654}\) \\
& TabSODA     & \cellcolor{lightgreen}\(\mathbf{0.455 \pm 0.087}\) & \cellcolor{lightgreen}\(\mathbf{0.703 \pm 0.044}\) & \cellcolor{lightgreen}\(\mathbf{0.900 \pm 0.018}\) & \cellcolor{lightgreen}\(\mathbf{0.983 \pm 0.006}\) & \(13.211 \pm 4.410\) \\
\addlinespace
\multirow{7}{*}{NSDUH}
& MICE        & \(0.824 \pm 0.165\) & \(0.578 \pm 0.059\) & \(0.503 \pm 0.048\) & \(0.397 \pm 0.067\) & \(58.073 \pm 15.594\) \\
& MissForest  & \(0.824 \pm 0.165\) & \(0.578 \pm 0.059\) & \(0.503 \pm 0.048\) & \(0.397 \pm 0.067\) & \(56.313 \pm 15.378\) \\
& TabCSDI  & \(0.436 \pm 0.067\) & \(0.688 \pm 0.039\) & \(0.690 \pm 0.027\) & \(0.692 \pm 0.047\) & \(71.867 \pm 16.756\) \\
& DiffPuter   & \(0.424 \pm 0.035\) & \(0.676 \pm 0.015\) & \(0.709 \pm 0.036\) & \(0.753 \pm 0.068\) & \(66.812 \pm 24.300\) \\
& TabSyn      & \(0.399 \pm 0.054\) & \(0.692 \pm 0.038\) & \(0.711 \pm 0.022\) & \(0.748 \pm 0.043\) & \(56.709 \pm 16.646\) \\
& TabDiff     & \(0.413 \pm 0.052\) & \(0.687 \pm 0.037\) & \(0.701 \pm 0.024\) & \(0.729 \pm 0.041\) & \(57.312 \pm 16.847\) \\
& TabSODA     & \cellcolor{lightgreen}\(\mathbf{0.362 \pm 0.041}\) & \cellcolor{lightgreen}\(\mathbf{0.715 \pm 0.034}\) & \cellcolor{lightgreen}\(\mathbf{0.739 \pm 0.020}\) & \cellcolor{lightgreen}\(\mathbf{0.779 \pm 0.036}\) & \cellcolor{lightgreen}\(\mathbf{53.114 \pm 16.101}\) \\
\bottomrule
\end{tabular}%
}
\end{table}

Table~\ref{tab:mar_ablation} reports the MAR ablation comparing TabSODA with the codebook-provided skip mask, TabSODA\,+\,SKIP using the estimated mask $\widehat{S}$, and TabSODA-N using analog-bit encoding for all categoricals. On NSDUH, TabSODA with the codebook-provided mask achieves the best ordinal MACE, ordinal accuracy, categorical accuracy, and numeric RMSE, while TabSODA\,+\,SKIP attains the highest nominal accuracy. On PATH, TabSODA-N wins on most metrics, indicating that the analog-bit fallback remains a strong choice when ordinal columns have lower cardinality. Importantly, the gap between TabSODA\,+\,SKIP and the codebook-supplied variant is small on every metric, showing that the estimated mask recovers most of the codebook-supplied benefit. The corresponding MCAR and MNAR ablations in Appendix~\ref{app:additional_results} display the same pattern.

\begin{table}[t]\medskip
\centering
\caption{Ablation under $30\%$ MAR (mean $\pm$ SD): TabSODA (with codebook-provided skip mask), TabSODA+SKIP (with estimated mask $\widehat{S}$), and TabSODA-N (analog-bit encoding for all categoricals).}
\label{tab:mar_ablation}
\small
\resizebox{\textwidth}{!}{%
\begin{tabular}{llccccc}
\toprule
Dataset & Method
& Ord.\ MACE \(\downarrow\)
& Ord.\ Acc.\ \(\uparrow\)
& Cat.\ Acc.\ \(\uparrow\)
& Nom.\ Acc.\ \(\uparrow\)
& Num.\ RMSE \(\downarrow\) \\
\midrule
\multirow{3}{*}{PATH}
& TabSODA-N        & \(0.472 \pm 0.090\) & \(\mathbf{0.717 \pm 0.040}\) & \(\mathbf{0.904 \pm 0.016}\) & \(0.982 \pm 0.006\) & \(\mathbf{13.054 \pm 4.261}\) \\
& TabSODA\,+\,SKIP & \(0.465 \pm 0.090\) & \(0.691 \pm 0.047\) & \(0.895 \pm 0.019\) & \(0.982 \pm 0.006\) & \(13.279 \pm 4.353\) \\
& TabSODA          & \cellcolor{lightgreen}\(\mathbf{0.455 \pm 0.087}\) & \(0.703 \pm 0.044\) & \(0.900 \pm 0.018\) & \cellcolor{lightgreen}\(\mathbf{0.983 \pm 0.006}\) & \(13.211 \pm 4.410\) \\
\addlinespace
\multirow{3}{*}{NSDUH}
& TabSODA-N        & \(0.379 \pm 0.045\) & \(0.714 \pm 0.033\) & \(0.730 \pm 0.019\) & \(0.761 \pm 0.041\) & \(53.828 \pm 16.069\) \\
& TabSODA\,+\,SKIP & \(0.499 \pm 0.093\) & \(0.697 \pm 0.023\) & \(0.737 \pm 0.019\) & \(\mathbf{0.848 \pm 0.027}\) & \(53.594 \pm 16.407\) \\
& TabSODA          & \cellcolor{lightgreen}\(\mathbf{0.362 \pm 0.041}\) & \cellcolor{lightgreen}\(\mathbf{0.715 \pm 0.034}\) & \cellcolor{lightgreen}\(\mathbf{0.739 \pm 0.020}\) & \(0.779 \pm 0.036\) & \cellcolor{lightgreen}\(\mathbf{53.114 \pm 16.101}\) \\
\bottomrule
\end{tabular}%
}
\end{table}

Table~\ref{tab:miner_evaluation} evaluates the skip-pattern miner independently from the imputation model. The mined mask attains near-perfect precision on PATH and high precision on NSDUH at every variable scale, with the false positive rate staying under $3\%$ across NSDUH and well below $1\%$ across PATH. These results show that the decision tree miner recovers a high-precision approximation to the codebook-provided skip mask from raw missingness and questionnaire order alone, which explains why TabSODA\,+\,SKIP recovers most of the codebook-supplied benefit in Table~\ref{tab:mar_ablation}.

\begin{table}[h]\medskip
\centering
\footnotesize
\setlength{\tabcolsep}{3pt}
\renewcommand{\arraystretch}{0.92}
\caption{Skip pattern miner evaluation on PATH and NSDUH variable subsets at the 50, 100, and 200 variable scales. PATH: \(N=32{,}320\); NSDUH: \(N=58{,}034\).}
\label{tab:miner_evaluation}
\begin{tabular}{@{}lrrrrrrrr@{}}
\toprule
Dataset & \#Vars & \#Mined & \#Rules & Prec. & Rec. & F1 & FPR & Acc. \\
\midrule
\multirow{3}{*}{PATH}
& 50  & 48  & 35  & 0.999 & 0.991 & 0.995 & 0.001 & 0.996 \\
& 100 & 87  & 61  & 0.999 & 0.895 & 0.944 & 0.001 & 0.941 \\
& 200 & 153 & 101 & 0.998 & 0.942 & 0.969 & 0.003 & 0.961 \\
\midrule
\multirow{3}{*}{NSDUH}
& 50  & 50  & 9   & 0.957 & 0.852 & 0.901 & 0.022 & 0.932 \\
& 100 & 99  & 29  & 0.943 & 0.789 & 0.859 & 0.028 & 0.905 \\
& 200 & 198 & 70  & 0.948 & 0.843 & 0.892 & 0.023 & 0.932 \\
\bottomrule
\end{tabular}
\end{table}

\section{Conclusion}
\label{sec:conclusion}

\vspace{-0.2cm} 

We presented \textbf{TabSODA}, a diffusion-based imputer for tabular survey data that combines skip-aware diffusion with ordinal-aware encoding. Structural skips are propagated through the denoising loss, noisy EDM input, and reverse-time sampler, while ordinal variables are represented by cumulative-probit scalar latents with an analog-bit fallback. The structural-skip mask can be supplied by codebook, or estimated by \textbf{TabSODA\,+\,SKIP}, a CART-based miner that uses only raw missingness and questionnaire order. In practice, we recommend using the codebook mask when available and the mined mask when codebook rules are unavailable or costly to encode. Future work can extend TabSODA to longitudinal surveys by modeling wave-specific skip logic, time-varying ordinal responses, and respondent-level temporal dependence within the diffusion sampler.

\bibliographystyle{plainnat}

\bibliography{references}
\newpage
\appendix
\setcounter{page}{1}
\section{Skip-Mining Rule Screens and Implementation Details}
\label{app:skip_mining}

This appendix gives the rule-acceptance details for the skip-pattern miner in
Section~\ref{sec:method:skip_learning}. For target item \(q_t\), CART proposes
candidate rules~\cite{breiman1984cart}. A rule \(r\) induces a mask
\(r_{it}\in\{0,1\}\), where \(r_{it}=1\) means respondent \(i\) satisfies the
rule path. Rules are scored against the raw-missing label
\(y_{it}=\mathbb I\{x_{it}\text{ is raw-missing}\}\), not against the
codebook-provided skip mask \(S_{it}\), which is unavailable during mining. Figure~\ref{fig:skip_detection_flow} summarizes the one-target learning flow.

\begin{figure}[h]
\centering
\small
\setlength{\fboxsep}{4pt}
\resizebox{\columnwidth}{!}{%
\begin{tabular}{c c c c c c c}
\fbox{\parbox{0.14\textwidth}{\centering Raw responses\\and question order}}
& $\rightarrow$ &
\fbox{\parbox{0.17\textwidth}{\centering Raw-missing target\\$y_t$}}
& $\rightarrow$ &
\fbox{\parbox{0.22\textwidth}{\centering CART using earlier\\responses and\\learned states}}
& $\rightarrow$ &
\fbox{\parbox{0.22\textwidth}{\centering Screen reliable rules;\\set covered blanks\\to $\widehat S_t=1$}}
\\
\end{tabular}%
}
\caption{Skip learning for one target item \(q_t\). The miner predicts the
raw-missing label \(y_t\) using only earlier questionnaire information. Accepted
rules promote covered raw blanks to structural skips, producing \(\widehat S_t\).}
\label{fig:skip_detection_flow}
\end{figure}

\begin{table}[h]
\centering
\small
\caption{Structural skip pattern induced by a gating question (first column). Participants who responded `No' are asked to skip the follow-up questions, and move to the next valid question.}
\label{tab:structural_skip_example}
\renewcommand{\arraystretch}{1.1}
\resizebox{\columnwidth}{!}{%
\begin{tabular}{c|cccc|c}
\toprule
Lifetime cigarette use & Current smoking & Last smoked & Recency detail & Daily quantity & Lifetime e-cigarette use \\
\midrule
\cellcolor{lightgreen}asked (Y/N) & \cellcolor{skipgray}skip & \cellcolor{skipgray}skip & \cellcolor{skipgray}skip & \cellcolor{skipgray}skip & \cellcolor{lightgreen}asked \\
\bottomrule
\end{tabular}%
}
\end{table}

\paragraph{Rule counts and screening rates.}
Let \(\mathcal I_t\) be the rows used to score a candidate rule for \(q_t\). The
rule table is
\[
\begin{array}{c|cc}
 & y_{it}=1 & y_{it}=0 \\
\hline
r_{it}=1 & \mathrm{TP}_t & \mathrm{FP}_t \\
r_{it}=0 & \mathrm{FN}_t & \mathrm{TN}_t
\end{array},
\qquad
\begin{aligned}
\mathrm{TP}_t &= \sum_{i\in\mathcal I_t}\mathbb I\{r_{it}=1,y_{it}=1\},&
\mathrm{FP}_t &= \sum_{i\in\mathcal I_t}\mathbb I\{r_{it}=1,y_{it}=0\},\\
\mathrm{FN}_t &= \sum_{i\in\mathcal I_t}\mathbb I\{r_{it}=0,y_{it}=1\},&
\mathrm{TN}_t &= \sum_{i\in\mathcal I_t}\mathbb I\{r_{it}=0,y_{it}=0\}.
\end{aligned}
\]
Thus, \(\mathrm{TP}_t\) counts covered raw blanks, \(\mathrm{FP}_t\) covered
nonblank cells, \(\mathrm{FN}_t\) missed raw blanks, and \(\mathrm{TN}_t\)
nonblank cells not covered by the rule. Define
\[
n_+(t)=\mathrm{TP}_t+\mathrm{FN}_t,\qquad
n_-(t)=\mathrm{FP}_t+\mathrm{TN}_t,\qquad
\widehat{\mathrm{FPR}}_t=\frac{\mathrm{FP}_t}{n_-(t)},\qquad
\widehat{\mathrm{FNR}}_t=\frac{\mathrm{FN}_t}{n_+(t)} .
\]
Here \(n_+(t)\) is the number of raw-missing target rows and \(n_-(t)\) is the
number of nonblank target rows. The false-positive rate measures rule firing on
nonblank targets, while the false-negative rate measures missed raw blanks.
Targets with \(n_+(t)=0\) or \(n_-(t)=0\) are excluded from rule acceptance.

\paragraph{Position-adjusted caps and support.}
Let \(d\) be the number of questionnaire columns and let \(\operatorname{pos}(t)\)
be the zero-based file-order position of \(q_t\). Define
\[
\rho(t)=\frac{\operatorname{pos}(t)}{d}.
\]
The value \(\rho(t)\) is near \(0\) for early items and near \(1\) for late
items. Since late follow-up items often apply to fewer respondents, the miner
uses
\begin{equation}
  \label{eq:miner_caps}
  f_{\mathrm{FPR}}^{(t)}
  =
  \eta_{\mathrm{FPR}}^{0}\{1+\alpha_{\mathrm{FPR}}\rho(t)\},
  \qquad
  f_{\mathrm{FNR}}^{(t)}
  =
  \eta_{\mathrm{FNR}}^{0}\{1+\alpha_{\mathrm{FNR}}\rho(t)\},
\end{equation}
where \(\eta_{\mathrm{FPR}}^{0}\) and \(\eta_{\mathrm{FNR}}^{0}\) are base error
caps, and \(\alpha_{\mathrm{FPR}}\) and \(\alpha_{\mathrm{FNR}}\) determine the
late-item relaxation. The false-negative cap also includes a small positive-class
correction,
\begin{equation}
  \label{eq:miner_fnr_cap}
  f_{\mathrm{FNR,cap}}^{(t)}
  =
  \min\left\{
  f_{\mathrm{FNR}}^{(t)} + n_+(t)^{-1/2},
  \eta_{\mathrm{FNR}}^{\mathrm{ceil}}
  \right\}.
\end{equation}
The \(n_+(t)^{-1/2}\) term reduces rejection of clean rules for sparse targets,
and \(\eta_{\mathrm{FNR}}^{\mathrm{ceil}}\) limits the total relaxation.

The target-specific support threshold is
\begin{equation}
  \label{eq:miner_support}
  n_{\min}^{(t)}
  =
  \max\left\{
  \left\lfloor \beta_n n_{\min}^{\mathrm{abs}}\right\rfloor,\,
  \left\lfloor n_{\min}\{1-\alpha_n\rho(t)\}\right\rfloor
  \right\},
  \qquad
  n_{\min}
  =
  \max\left\{n_{\min}^{\mathrm{abs}},\,\lceil \phi_{\min}N\rceil\right\}.
\end{equation}
Here \(N\) is the number of respondents, \(n_{\min}^{\mathrm{abs}}\) is an
absolute minimum, \(\phi_{\min}\) is a minimum support fraction, \(\alpha_n\)
relaxes support for later items, and \(\beta_n\) sets the lower floor relative
to \(n_{\min}^{\mathrm{abs}}\).

\paragraph{Wilson-bound screen.}
For an error count \(e\) among \(n\) relevant rows, let \(\widehat p=e/n\). With
\(z=1.96\), the Wilson upper confidence bound is
\begin{equation}
  \label{eq:wilson_ucb}
  \mathrm{UCB}(e,n)
  =
  \frac{\widehat p+z^2/(2n)}{1+z^2/n}
  +
  \frac{z}{1+z^2/n}
  \sqrt{\frac{\widehat p(1-\widehat p)+z^2/(4n)}{n}}.
\end{equation}
A candidate rule is eligible for the multiple-testing screen only if
\begin{equation}
  \label{eq:skip_rule_screen}
  \mathrm{UCB}\{\mathrm{FP}_t,n_-(t)\}
  \le
  f_{\mathrm{FPR}}^{(t)}
  \quad\text{and}\quad
  \mathrm{UCB}\{\mathrm{FN}_t,n_+(t)\}
  \le
  f_{\mathrm{FNR,cap}}^{(t)} .
\end{equation}
The first inequality controls firing on nonblank target cells; the second
controls missed raw blank target cells while counting finite-sample uncertainty.

\paragraph{Fisher test, FDR screen, and rule application.}
Rules passing \eqref{eq:skip_rule_screen} are tested for association with raw
target missingness using Fisher's exact test~\cite{fisher1934exact} on the
\(2\times2\) table above. Benjamini--Hochberg FDR control~\cite{benjamini1995fdr}
is then applied to the resulting \(p\)-values with threshold \(q_{\mathrm{BH}}\).
Let \(\mathcal R_t\) be the accepted rule set for \(q_t\). The mined skip mask is
\[
\widehat S_{it}
=
\mathbb I\left\{
x_{it}\text{ is raw-missing}
\;\text{and}\;
\exists r\in\mathcal R_t\text{ such that }r_{it}=1
\right\}.
\]
Thus, accepted rules can promote only raw blank cells to structural skips;
observed cells are never overwritten.

\paragraph{Neighboring-target extension and duplicate removal.}
A high-confidence rule for \(q_t\) may extend to nearby targets within eight
questionnaire columns in either direction when the same prediction mask explains
a block of follow-up items. For an extension target, the same mask must pass
relaxed checks with FPR and FNR caps \(1.5\,\eta_{\mathrm{FPR}}^{0}\) and
\(2.0\,\eta_{\mathrm{FNR}}^{0}\), respectively, and must have support on both
sides of the prediction mask. Duplicate rules with the same target set,
orientation, logic type, and prediction mask are collapsed, keeping the rule
with the smaller false-positive Wilson upper bound; ties are broken by larger
coverage.

\paragraph{Hyperparameters.}
The reported implementation uses maximum CART depth \(D_{\max}=4\), class
weights \(W=\{0\!:\!2,1\!:\!1\}\), minimum absolute leaf size
\(n_{\min}^{\mathrm{abs}}=30\), and BH threshold \(q_{\mathrm{BH}}=0.01\). The
shared position-adjustment constants and false-negative ceiling are
\[
(\alpha_{\mathrm{FPR}},\alpha_{\mathrm{FNR}},\alpha_n,\beta_n)
=
(0.2,0.5,0.25,0.75),
\qquad
\eta_{\mathrm{FNR}}^{\mathrm{ceil}}=0.05.
\]
The survey-level settings
\((\phi_{\min},\eta_{\mathrm{FPR}}^{0},\eta_{\mathrm{FNR}}^{0})\) are
\[
(0.02,0.007,0.02)
\quad\text{for PATH, NSDUH-50, and NSDUH-100,}
\qquad
(0.005,0.007,0.05)
\quad\text{for NSDUH-200.}
\]
These operating points are fixed before downstream imputation evaluation and are
not selected using imputation accuracy.

\section{Augmented Ordinal Loss and Cutpoint Initialization}
\label{app:ord_loss}

The cumulative-link NLL evaluated at latent $a$ is
\begin{equation}
  \label{eq:nll}
  \ell_{\mathrm{NLL},ij}(a,b_j)=-\log\max\{\Phi(b_{j,x_{ij}}-a)-\Phi(b_{j,x_{ij}-1}-a),\epsilon_\pi\},
\end{equation}
with $\epsilon_\pi=10^{-8}$. The RPS term \cite{epstein1969rps} sums squared cumulative residuals
\begin{equation}
  \label{eq:rps}
  \ell_{\mathrm{RPS},ij}(a,b_j)=\frac{1}{K_j-1}\sum_{k=1}^{K_j-1}(\widehat{F}_{ijk}(a,b_j)-H_{ijk})^2,
\end{equation}
where $\widehat{F}_{ijk}=\sum_{c\leq k}P(x_{ij}=c\mid a,b_j)$ and $H_{ijk}=\ind\{x_{ij}\leq k\}$. The promoted configuration uses uniform per-cell weighting (class balancing is disabled).

\paragraph{Softplus cutpoint reparameterization.} $b_{j,1}=\alpha_j$, and for $k\geq 2$, $b_{j,k}=b_{j,k-1}+\mathrm{softplus}(\delta_{j,k})+\epsilon_b$, with $\epsilon_b=10^{-4}$. This guarantees $b_{j,1}<\cdots<b_{j,K_j-1}$ for any $\alpha_j,\delta_{j,k}\in\R$.

\paragraph{Smoothed empirical-CDF init.} Cutpoints $b_j^{(0)}$ are initialized at $r=0$ from
\begin{equation}
  \label{eq:cutpoint_init}
  \widehat{F}_{j,\mathrm{smooth}}(k)=\frac{n_j\widehat{F}_{j,\mathrm{emp}}(k)+\alpha_{\mathrm{prior}}\,k/K_j}{n_j+\alpha_{\mathrm{prior}}},
  \quad
  b_{j,k}^{(0)}=\Phi^{-1}(\mathrm{clip}\{\widehat{F}_{j,\mathrm{smooth}}(k),10^{-4},1-10^{-4}\}),
\end{equation}
with $\alpha_{\mathrm{prior}}=5$. Eq.~\eqref{eq:cutpoint_init} is invoked only at $r=0$; subsequent iterations warm-start from the previous post-calibrated values.


The encoded state $\mathbf{z}_i^{(0)}\in\R^p$ defined in Eq.~\eqref{eq:encoding} of the main text concatenates standardized continuous entries $\tilde{x}_{ij}=(x_{ij}-\mu_j)/\sigma_j$ (with training-visible moments), analog-bit codewords $q_j(k)_b=\mathrm{bit}_b(k-1)$ for nominal and fallback-routed ordinal columns ($K_j$-category variables map to $\{0,1\}^{B_j}$ with $B_j=\max\{1,\lceil\log_2 K_j\rceil\}$ \cite{chen2023analogbits}), and scalar cumulative-probit latent entries $u_{ij}$ for scalar-routed ordinal columns. After concatenation, all non-skip entries pass through a coordinate-wise affine standardization $T_\Sigma$ that is refreshed across EM iterations under one of two policies (Appendix~\ref{app:hyperparams}): \emph{once} (fit on the initial completed table and held fixed thereafter, stabilizing the EDM denoising target across iterations when continuous columns dominate) or \emph{each\_iter} (recomputed at the start of every outer iteration, tracking the evolving ordinal-latent distribution when categorical columns dominate).

The ordinal-route partition $\mathcal{J}_o=\mathcal{J}_o^{\mathrm{sc}}\cup\mathcal{J}_o^{\mathrm{fb}}$ in Eq.~\eqref{eq:routing} is determined by the dominant category share $d_j=\max_k\widehat{p}_{jk}$, with $\tau_{\mathrm{route}}=0.70$: columns with $d_j\geq\tau_{\mathrm{route}}$ are near-degenerate and routed to the analog-bit fallback because their cutpoint estimates are unstable, while columns with $d_j<\tau_{\mathrm{route}}$ are routed to the cumulative-probit scalar latent.

\paragraph{Cumulative-probit observation model.}
For $j\in\mathcal{J}_o^{\mathrm{sc}}$ with ordered cutpoints $-\infty=b_{j,0}<b_{j,1}<\cdots<b_{j,K_j-1}<b_{j,K_j}=+\infty$, the latent variable $u_{ij}$ encodes the observed category through $x_{ij}=k\iff b_{j,k-1}<u_{ij}\leq b_{j,k}$, and the decoder of Eq.~\eqref{eq:probit_decoder} produces $P(x_{ij}=k\mid \hat{u}_{ij}) = \Phi(b_{j,k}-\hat{u}_{ij})-\Phi(b_{j,k-1}-\hat{u}_{ij})$. Cutpoints are guaranteed ordered through a softplus reparameterization \cite{wiemann2024softplus,christensen2025clm}, initialized at $r=0$ from a smoothed empirical CDF (see Eq.~\eqref{eq:cutpoint_init}), and warm-started across EM iterations.

\paragraph{Observed-cell encoder: warm-start schedule.}
The training-side encoder $g_j(k;b_j)$ uses a deterministic interval mean for the initial completion ($r=0$) and a stochastic truncated-normal draw thereafter:
\begin{equation}
  \label{eq:warmstart_schedule}
  u_{ij}^{\mathrm{train},(r)} =
  \begin{cases}
    \bar{g}_j(x_{ij};b_j^{(0)})=\E[U\mid b_{j,k-1}<U\leq b_{j,k},U\sim\mathcal{N}(0,1)], & r=0,\\
    \mathcal{N}(0,1)\text{ truncated to }(b_{j,x_{ij}-1},b_{j,x_{ij}}], & r\geq 1.
  \end{cases}
\end{equation}
The $r\geq 1$ branch is the diffusion-time analog of Albert--Chib data augmentation \cite{albertchib1993}: stochastic latent draws propagate within-category uncertainty into subsequent cutpoint and denoiser updates, while the $r=0$ branch supplies a stable, rank-preserving target before the denoiser has signal. Missing cells use a column-median initialization at $r=0$ \cite[\S 4.3]{littlerubin2002}.

\paragraph{Augmented ordinal loss and width parity.}
The per-column ordinal loss $\mathcal{L}_{\mathrm{ord},j}$ defined in Eq.~\eqref{eq:scalar_loss} of the main text is a uniform-weighted cumulative-link NLL plus a Ranked Probability Score \cite{epstein1969rps} term, averaged per column:
\[
  \mathcal{L}_{\mathrm{ord},j}
  = \frac{1}{|\Omega_j|}\sum_{i\in\Omega_j}\bigl[\ell_{\mathrm{NLL},ij}+\lambda_{\mathrm{RPS}}\,\ell_{\mathrm{RPS},ij}\bigr].
\]
The class-balanced variant of the loss (with inverse-frequency weights $\omega_{j,x_{ij}}$) is implemented but is disabled in our promoted configuration. A single scalar-routed ordinal coordinate would receive less gradient mass than a $B_j$-bit nominal block; we compensate with a width-parity multiplier $\lambda_j^{\mathrm{width}}=\max\{1,\lceil\log_2 K_j\rceil\}$ on the EDM reconstruction weight.

\paragraph{Validated per-column temperature.}
At decode time, we rescale the cumulative-probit decoder by a per-column temperature $\tau_j^\star>0$:
\begin{equation}
  \label{eq:temperature_decoder}
  P(x_{ij}=k\mid \hat{u}_{ij},\tau_j) = \Phi\!\left(\tfrac{b_{j,k}-\hat{u}_{ij}}{\tau_j}\right)-\Phi\!\left(\tfrac{b_{j,k-1}-\hat{u}_{ij}}{\tau_j}\right).
\end{equation}
Unlike softmax temperature scaling \cite{guo2017calibration}, $\tau_j$ \emph{can} change MAP predictions because cumulative-probit class probabilities are differences of CDFs rather than monotone transforms of a logit vector. We fit $\tau_j^\star$ on a held-out cell-level validation pool ($\rho_{\mathrm{val}}=0.20$ per column, removed from M-step training signal) by lexicographic minimization over a fixed temperature grid; the grid, the lexicographic selection key, the validation construction, and the limit expansions are in Appendix~\ref{app:temperature}.

\paragraph{Nominal prototype cross-entropy.}
Let $\hat{\mathbf z}_{ij}\in\R^{B_j}$ denote the low-noise denoiser output restricted to the analog-bit block of nominal column $j\in\mathcal{J}_n$, and let $\{q_j(k)\}_{k=1}^{K_j}\!\subset\!\{0,1\}^{B_j}$ be the valid codewords from Section~\ref{sec:method:ordinalencoding}. The prototype-softmax probability assigned to category $k$ is
\begin{equation}
\label{eq:nom_softmax}
p_\theta(x_{ij}\!=\!k\mid\hat{\mathbf z}_{ij})
=
\frac{\exp\!\bigl(-\|\hat{\mathbf z}_{ij}-q_j(k)\|_2^{2}\bigr)}{\sum_{k'=1}^{K_j}\exp\!\bigl(-\|\hat{\mathbf z}_{ij}-q_j(k')\|_2^{2}\bigr)},
\end{equation}
and $\mathcal L_{\mathrm{nom}}^{(r)}$ is the negative log-likelihood of the observed category averaged over training-visible cells,
\begin{equation}
\label{eq:nom_loss}
\mathcal L_{\mathrm{nom}}^{(r)}
=
-\sum_{j\in\mathcal J_n}
\frac{1}{|\Omega_j|}
\sum_{i\in\Omega_j}
\log p_\theta\!\bigl(x_{ij}\,\big|\,\hat{\mathbf z}_{ij}\bigr).
\end{equation}

\section{Validated Per-Column Temperature Search}
\label{app:temperature}

\paragraph{Validation construction.} For each scalar-routed ordinal column $j$, sample a cell-level validation pool $V_j\subseteq\Omega_j$ of size $\lceil\rho_{\mathrm{val}}|\Omega_j|\rceil$, where $\Omega_j=\{i:O_{ij}=1\}$ are the training-visible cells (Section~\ref{sec:method:diffusion_em}). Holdout cells are removed from all M-step training signals for column $j$ (encoded entries, observed-mask channel, ordinal-target lookup overwritten as missing). After the final outer iteration, the denoiser at noise $\sigma_{\mathrm{cal}}$ produces $\hat{u}_{ij}^{\mathrm{cal}}$; the pairs $(\hat{u}_{ij}^{\mathrm{cal}},x_{ij})_{i\in V_j}$ form the held-out validation set.

\paragraph{Selection key.} For each $\tau\in\mathcal{T}=\{0.5,0.7,1.0,1.4,2.0\}$ and each column $j$, compute on $V_j$ the accuracy $\mathrm{ACC}_j(\tau)$, MACE $\mathrm{MACE}_j(\tau)=|V_j|^{-1}\sum_i|\hat{x}_{ij}(\tau)-x_{ij}|$, and clipped NLL $\mathrm{NLL}_j(\tau)$. Select $\tau_j^\star$ by minimization of $K_j(\tau)=(-\mathrm{ACC}_j,\mathrm{MACE}_j,\mathrm{NLL}_j,|\tau-1|)$. The grid contains $\tau=1$, so the unscaled decoder is always available as fallback.

\paragraph{Why temperature can change the argmax.} Softmax temperature scaling preserves argmax \cite{guo2017calibration}; the cumulative-probit decoder does not, because per-class probabilities are CDF differences rather than monotone transforms of a logit vector. As $\tau\to 0^+$, all mass concentrates on the interval containing $\hat{u}_{ij}$; as $\tau\to\infty$, $P(x_{ij}=k\mid\hat{u}_{ij},\tau)=(b_{j,k}-b_{j,k-1})\phi(0)\tau^{-1}+O(\tau^{-3})$ for interior categories, so the limiting argmax is the widest-interval category, independent of $\hat{u}_{ij}$. Per-column temperature is therefore an accuracy lever in the cumulative-probit setting, not just a calibration knob.

\section{Denoiser Architecture}
\label{app:arch}

The TabSODA denoiser is a state-conditioned MLP wrapped in EDM preconditioning \cite{karras2022edm}, mirroring the released DiffPuter implementation \cite{zhang2025diffputer} and extending its conditioning channel from the observed mask alone to the joint observed and structural-skip masks $(\mathbf{o}_i^\star,\mathbf{s}_i^\star)$ of Eq.~\eqref{eq:tabsoda_loss}. Let $p$ be the encoded dimension (Section~\ref{sec:method:ordinalencoding}), $H$ the hidden width, and $\sigma$ the noise level.

\paragraph{EDM preconditioning.} The full denoiser is
\begin{equation}
  \label{eq:precond}
  D_\theta(\mathbf{z},\sigma,\mathbf{o}_i^\star,\mathbf{s}_i^\star) = \kappa_{\mathrm{skip}}(\sigma)\,\mathbf{z} + \kappa_{\mathrm{out}}(\sigma)\,F_\theta\!\bigl(\kappa_{\mathrm{in}}(\sigma)\,\mathbf{z},\,\kappa_{\mathrm{noise}}(\sigma),\,\mathbf{o}_i^\star,\,\mathbf{s}_i^\star\bigr),
\end{equation}
with the standard EDM scalings \cite{karras2022edm}
\[
\kappa_{\mathrm{skip}} = \frac{\sigma_{\mathrm{data}}^2}{\sigma^2+\sigma_{\mathrm{data}}^2},
\quad
\kappa_{\mathrm{out}} = \frac{\sigma\,\sigma_{\mathrm{data}}}{\sqrt{\sigma^2+\sigma_{\mathrm{data}}^2}},
\quad
\kappa_{\mathrm{in}} = \frac{1}{\sqrt{\sigma^2+\sigma_{\mathrm{data}}^2}},
\quad
\kappa_{\mathrm{noise}}(\sigma) = \tfrac{1}{4}\log\sigma,
\]
and $\sigma_{\mathrm{data}}=0.5$ (Appendix~\ref{app:hyperparams}).

\paragraph{Backbone $F_\theta$.} The backbone is a feedforward SiLU MLP that takes the preconditioned input $\kappa_{\mathrm{in}}\,\mathbf{z}\in\R^p$ concatenated with the conditioning masks $\mathbf{o}_i^\star,\mathbf{s}_i^\star\in\{0,1\}^p$ (jointly $2p$ extra channels), projected to width $H$ and combined additively with the time embedding before three hidden Linear$\to$SiLU pairs and a final linear readout:
\begin{align}
  h_0 &= W_{\mathrm{proj}}\,[\kappa_{\mathrm{in}}\,\mathbf{z};\, \mathbf{o}_i^\star;\, \mathbf{s}_i^\star] + \phi(\kappa_{\mathrm{noise}}(\sigma)) \in \R^{H}, \\
  h_1 &= \mathrm{SiLU}(W_1 h_0)\in\R^{2H}, \quad
  h_2 = \mathrm{SiLU}(W_2 h_1)\in\R^{2H}, \\
  h_3 &= \mathrm{SiLU}(W_3 h_2)\in\R^{H}, \quad
  F_\theta = W_4 h_3 \in\R^{p}.
\end{align}
The block widths $H{\to}2H{\to}2H{\to}H$ follow the released DiffPuter denoiser; only the input projection $W_{\mathrm{proj}}\in\R^{H\times 3p}$ differs from DiffPuter, by $p$ extra columns to absorb the structural-skip mask channel.

\paragraph{Time embedding $\phi$.} The scalar $\kappa_{\mathrm{noise}}(\sigma)$ is mapped to $\R^H$ by a sinusoidal positional embedding with $H/2$ logarithmically spaced frequencies $\omega_k=(1/10000)^{k/(H/2)}$ for $k=0,\ldots,H/2-1$, returning $[\cos(\omega_k\kappa_{\mathrm{noise}})\,\|\,\sin(\omega_k\kappa_{\mathrm{noise}})]$ with the cosine and sine halves swapped (matching the DiffPuter implementation). The result is passed through a two-layer SiLU MLP $\phi:\R^H\to\R^H$ before being added to $W_{\mathrm{proj}}\,[\kappa_{\mathrm{in}}\,\mathbf{z};\,\mathbf{o}_i^\star;\,\mathbf{s}_i^\star]$.

\paragraph{Width selection.} The promoted width $H=1024$ was selected by an internal sweep over $H\in\{256,512,768,1024\}$ on PATH: ordinal accuracy improved monotonically up through $1024$ with diminishing returns, and per-replication wall-clock grew sub-linearly because the dominant cost is E-step sampling rather than backbone forward passes. The sinusoidal-embedding base $10000$ and the EDM scalings are kept at the values used by the released DiffPuter code so that the only architectural differences between TabSODA and DiffPuter are (i) the $+p$-column conditioning expansion and (ii) the encoded-dimension change induced by routing scalar ordinal columns through a single latent variable instead of a $\lceil\log_2 K_j\rceil$-bit codeword.

\section{EDM Langevin SDE Sampler (E-Step)}
\label{app:heun}

\paragraph{Generic EM updates.}
With complete-data density $p_\theta(\mathbf z_i\mid\mathbf s_i^\star)$, the ideal E-step (Section~\ref{sec:method:diffusion_em})~\cite{zhang2025diffputer} updates missing entries by the conditional expectation
\begin{equation}\label{eq:em_estep}
\mathbf z_{i,\mathrm{mis}}^{(r+1)} \approx \mathbb E\bigl[\mathbf z_{i,\mathrm{mis}}\mid\mathbf z_{i,\mathrm{obs}};\theta^{(r)}\bigr],
\end{equation}
and the M-step updates the parameters by maximizing the resulting complete-data log-likelihood,
\begin{equation}\label{eq:em_mstep}
\theta^{(r+1)} = \argmax_\theta \sum_{i=1}^N \log p_\theta\bigl(\mathbf z_{i,\mathrm{obs}},\mathbf z_{i,\mathrm{mis}}^{(r+1)}\,\big|\,\mathbf s_i^\star\bigr).
\end{equation}
TabSODA approximates the E-step in Eq.~\eqref{eq:em_estep} by conditional reverse diffusion through Eqs.~\eqref{eq:obs_refresh}--\eqref{eq:mix} and the M-step in Eq.~\eqref{eq:em_mstep} by the EDM denoising regression of Eq.~\eqref{eq:diffusion_loss}.

Following Karras et al.\ \cite[\S 6]{karras2022edm}, the reverse-time SDE we sample from decomposes into a probability-flow ODE plus a Langevin diffusion term whose stationary marginal at fixed $t$ is the model's $p_t(\mathbf{z})$. Each reverse step from $\tau_m$ to $\tau_{m+1}$ first inflates the noise level by a churn step ($\hat{\tau}_m=(1+\gamma_m)\tau_m$ with $\gamma_m=\min(S_{\mathrm{churn}}/M,\sqrt{2}-1)$) and injects fresh Wiener noise $\hat{\mathbf{z}}_i=\mathbf{z}_i+\sqrt{\hat{\tau}_m^2-\tau_m^2}\,S_{\mathrm{noise}}\boldsymbol{\eta}_i$, then applies a second-order Heun integrator to transport the state from $\hat{\tau}_m$ down to $\tau_{m+1}$. The Heun corrector is $\mathbf{z}_{m+1}=\hat{\mathbf{z}}+\tfrac12(\tau_{m+1}-\hat{\tau}_m)(\mathbf{d}_m+\mathbf{d}'_m)$ with $\mathbf{d}_m=\bigl(\hat{\mathbf{z}}-D_\theta(\hat{\mathbf{z}},\hat{\tau}_m,\mathbf{o}_i^\star,\mathbf{s}_i^\star)\bigr)/\hat{\tau}_m$, conditioning the denoiser on the encoded observed and structural-skip masks. We use $S_{\mathrm{churn}}=S_{\mathrm{noise}}=1$ to recover the consistent (clean) Langevin form and a single mixed-update cycle ($N_{\mathrm{refresh}}=1$) per reverse step. After the final reverse step, missing non-skip ordinal entries are decoded to MAP categories through the temperature-scaled cumulative-probit decoder of Eq.~\eqref{eq:temperature_decoder}; nominal and fallback ordinal columns decode through the analog-bit codeword route.

\paragraph{E-step sampler details.}
The reverse-time noise grid follows the EDM polynomial schedule
\[
\tau_m=
\left(
\sigma_{\max}^{1/\rho}
+
\frac{m}{M-1}
\{\sigma_{\min}^{1/\rho}-\sigma_{\max}^{1/\rho}\}
\right)^\rho,
\qquad
m=0,\ldots,M-1,\quad \rho=7.
\]
The sampler step \(\mathcal S_\theta\) in Eq.~\eqref{eq:miss_update} combines a churn-step Euler--Maruyama update for the Langevin correction with a second-order Heun integrator for the probability-flow ODE~\cite{song2021scoresde,karras2022edm}. At each step, observed entries are refreshed by Eq.~\eqref{eq:obs_refresh}, applicable missing entries are updated by Eq.~\eqref{eq:miss_update}, and structural-skip entries are clamped through Eq.~\eqref{eq:mix}.
Algorithm~\ref{alg:tabsoda} is the full outer EM loop.

\begin{algorithm}[h]
\caption{TabSODA: outer EM training loop with skip-aware M-step and skip-clamped E-step.}
\label{alg:tabsoda}
\begin{algorithmic}[1]
\Require Raw table $\mathbf{X}\in(\mathbb{R}\cup\{\textsc{na}\})^{N\times d}$, codebook-provided or estimated skip mask $S\in\{0,1\}^{N\times d}$, type sets $\mathcal{J}_c,\mathcal{J}_n,\mathcal{J}_o$, outer iterations $R$, reverse steps $M$
\Statex \textbf{Notation:} $A,O,M\!\in\!\{0,1\}^{N\times d}$ are the (optional) hiding, training-visible, and applicable-missing survey masks (Section~\ref{sec:method:diffusion_em}); $\mathbf{o}_i^\star,\mathbf{s}_i^\star,\mathbf{m}_i^\star\!\in\!\{0,1\}^p$ are the encoded broadcasts of $O_i,S_i,M_i$ and $\mathbf{w}_i^\star=\mathbf{1}-\mathbf{s}_i^\star$; $\mathbf{Z}^{(r)}=(\mathbf{z}_1^{(r)},\ldots,\mathbf{z}_N^{(r)})^\top\!\in\!\mathbb{R}^{N\times p}$ is the current completed encoded table; $D_\theta$ is the EDM-preconditioned denoiser conditioned on $(\mathbf{o}^\star,\mathbf{s}^\star)$; $\mathcal{S}_\theta$ is one Heun-discretized Langevin reverse-sampler step (Eq.~\eqref{eq:miss_update}); $T_\Sigma$ is the coordinate-wise standardizer, fit per the refresh policy of Appendix~\ref{app:hyperparams}; $\tau_0\!=\!\sigma_{\max},\tau_M\!=\!\sigma_{\min}$.
\Statex
\State Optionally sample artificial hiding mask $A$ (Appendix~\ref{app:impl:masking}); set $O_{ij}\!=\!R_{ij}(1-A_{ij})$, $M_{ij}\!=\!T_{ij}+R_{ij}A_{ij}$ (default at deployment: $O\!=\!R$, $M\!=\!T$ when $A\!=\!0$)
\State Broadcast survey masks to encoded space to obtain $\mathbf{o}_i^\star,\mathbf{s}_i^\star,\mathbf{m}_i^\star$, and set $\mathbf{w}_i^\star\!=\!\mathbf{1}-\mathbf{s}_i^\star$
\State Route each $j\!\in\!\mathcal{J}_o$ into scalar route $\mathcal{J}_o^{\mathrm{sc}}$ or analog-bit fallback $\mathcal{J}_o^{\mathrm{fb}}$ via Eq.~\eqref{eq:routing}; initialize cutpoints $b^{(0)}$ via Eq.~\eqref{eq:cutpoint_init}
\State Build initial encoded table $\mathbf{Z}^{(0)}$: observed cells via Eq.~\eqref{eq:encoding}; applicable-missing cells filled by column means (median for $\mathcal{J}_o^{\mathrm{sc}}$); structural-skip cells set to $0$
\State Fit standardizer $T_\Sigma$ on $\{\mathbf{z}_i^{(0)}\}$ over entries with $\mathbf{w}_i^\star\!=\!1$; refresh policy per Appendix~\ref{app:hyperparams} (\emph{once}: freeze for all $r$; \emph{each iter}: recompute at the start of every $r$)
\For{$r=1,\ldots,R$}
  \Statex \quad\textbf{M-step (blockwise calibration; Section~\ref{sec:method:diffusion_em}):}
  \If{refresh policy is \emph{each\_iter}}\State Refit $T_\Sigma$ on $\{\mathbf{z}_i^{(r-1)}\}$ over $\mathbf{w}_i^\star\!=\!1$ entries\EndIf
  \State \textbf{Phase 1: pre-calibrate cutpoints.}\; $b^{(r,\mathrm{pre})}\!\leftarrow\!\argmin_b \sum_{j\in\mathcal{J}_o^{\mathrm{sc}}}\mathcal{L}_{\mathrm{ord},j}\!\bigl(b\!\mid\!\theta^{(r-1)}\bigr)$ \quad(Eq.~\eqref{eq:scalar_loss})
  \State \textbf{Phase 2: update denoiser with cutpoints frozen.}\; $\theta^{(r)}\!\leftarrow\!\argmin_\theta\mathcal{L}_{\mathrm{TabSODA}}^{(r)}\!\bigl(\theta\!\mid\!b^{(r,\mathrm{pre})}\bigr)$ \quad(Eq.~\eqref{eq:tabsoda_loss})
  \State \textbf{Phase 3: post-calibrate cutpoints.}\; $b^{(r)}\!\leftarrow\!\argmin_b \sum_{j\in\mathcal{J}_o^{\mathrm{sc}}}\mathcal{L}_{\mathrm{ord},j}\!\bigl(b\!\mid\!\theta^{(r)}\bigr)$
  \Statex \quad\textbf{E-step (skip-clamped reverse sampling):}
  \For{each respondent $i=1,\ldots,N$}
    \State Initialize $\mathbf{z}_i^{(\tau_0)}\!\sim\!\mathcal{N}(\mathbf{0},\tau_0^2 I_p)$; clamp skip entries: $\mathbf{z}_i^{(\tau_0)}\!\leftarrow\!\mathbf{w}_i^\star\!\odot\!\mathbf{z}_i^{(\tau_0)}$
    \For{$m=0,\ldots,M-1$}
      \State Refresh observed entries: $\mathbf{z}_{i,\mathrm{obs}}^{(\tau_{m+1})}\!=\!\mathbf{z}_i^{(r)}+\tau_{m+1}\boldsymbol{\xi}_i$, $\boldsymbol{\xi}_i\!\sim\!\mathcal{N}(\mathbf{0},I_p)$ \quad(Eq.~\eqref{eq:obs_refresh})
      \State Update missing entries by Heun-discretized Langevin step: $\mathbf{z}_{i,\mathrm{mis}}^{(\tau_{m+1})}\!=\!\mathcal{S}_{\theta^{(r)}}\!\bigl(\mathbf{z}_i^{(\tau_m)},\tau_m,\tau_{m+1},\mathbf{o}_i^\star,\mathbf{s}_i^\star\bigr)$ \quad(Eq.~\eqref{eq:miss_update}; Appendix~\ref{app:heun})
      \State Mix observed and missing channels with skip clamp: $\mathbf{z}_i^{(\tau_{m+1})}\!=\!\mathbf{w}_i^\star\!\odot\!\bigl[(\mathbf{1}-\mathbf{m}_i^\star)\!\odot\!\mathbf{z}_{i,\mathrm{obs}}^{(\tau_{m+1})}+\mathbf{m}_i^\star\!\odot\!\mathbf{z}_{i,\mathrm{mis}}^{(\tau_{m+1})}\bigr]$ \quad(Eq.~\eqref{eq:mix})
      \State Clamp skip entries: $\mathbf{z}_i^{(\tau_{m+1})}\!\leftarrow\!\mathbf{w}_i^\star\!\odot\!\mathbf{z}_i^{(\tau_{m+1})}$
    \EndFor
    \State Update completed table $\mathbf{z}_i^{(r+1)}$: continuous entries from $\mathbf{z}_i^{(\tau_M)}$; nominal and $\mathcal{J}_o^{\mathrm{fb}}$ via analog-bit decoding; $\mathcal{J}_o^{\mathrm{sc}}$ via cumulative-probit MAP at $\tau_j\!=\!1$ (Eq.~\eqref{eq:probit_decoder})
  \EndFor
\EndFor
\Statex \textbf{Decode-time temperature search (Section~\ref{sec:method:ordinalencoding}):}
\State Fit per-column temperatures $\tau_j^\star$ for $j\!\in\!\mathcal{J}_o^{\mathrm{sc}}$ by lexicographic minimization on the held-out validation pool $V_j$ (Appendix~\ref{app:temperature})
\State Decode missing $\mathcal{J}_o^{\mathrm{sc}}$ cells via temperature-scaled MAP (Eq.~\eqref{eq:temperature_decoder})
\State \Return Imputed matrix $\hat{\mathbf{X}}$
\end{algorithmic}
\end{algorithm}

\section{Default Hyperparameters}
\label{app:hyperparams}

Table~\ref{tab:hyperparams} lists the TabSODA configuration. Most values are shared across PATH and NSDUH; entries that differ between the two benchmarks are listed in separate columns. The promoted configuration uses uniform per-cell weighting in the ordinal auxiliary loss (i.e., class balancing is disabled).

\begin{table}[h]
\centering
\small
\caption{TabSODA hyperparameter configuration. Single-column rows apply to both PATH and NSDUH; two-column rows give the dataset-specific value.}
\label{tab:hyperparams}
\begin{tabular}{lccp{0.40\textwidth}}
\toprule
Parameter & PATH & NSDUH & Description \\
\midrule
\multicolumn{4}{l}{\textit{EM and replication}} \\
$R$ & \multicolumn{2}{c}{$5$} & Outer EM iterations \\
$R_{\min}$ & \multicolumn{2}{c}{$2$} & Minimum accepted outer iterations \\
$N_{\mathrm{imp}}$ & \multicolumn{2}{c}{$20$} & Sampled imputations per outer iteration (Monte Carlo E-step) \\
$L$ & \multicolumn{2}{c}{$50$} & Evaluation replications \\
Pseudo-missing ramp & \multicolumn{2}{c}{$(0,0,0.25,0.5,0.75)$} & Per-iteration weights for imputed non-skip cells \\
$(\lambda_{\mathrm{cont}},\lambda_{\mathrm{nom}},\lambda_{\mathrm{ord}})$ & \multicolumn{2}{c}{$(0.25,1.0,3.0)$} & Auxiliary loss weights \\
$\lambda_{\mathrm{RPS}}$ & \multicolumn{2}{c}{$0.1$} & RPS weight in scalar ordinal loss \\
$\tau_{\mathrm{route}}$ & \multicolumn{2}{c}{$0.70$} & Hybrid ordinal routing threshold (extreme-nominal fallback) \\
$\sigma_{\mathrm{cal}}$ & \multicolumn{2}{c}{$0.02$} & Calibration noise level \\
\midrule
\multicolumn{4}{l}{\textit{Encoding and warmup}} \\
Standardizer refresh & each iter & once & Affine standardizer recomputed each EM iter (PATH) or fit once (NSDUH) \\
Width-parity boost & \multicolumn{2}{c}{$\lceil\log_2 K_j\rceil$} & $\lambda_j^{\mathrm{width}}$ on scalar ordinal entries \\
\midrule
\multicolumn{4}{l}{\textit{Diffusion backbone (EDM)}} \\
$H$ & $1024$ & $512$ & Denoiser width \\
$(P_{\mathrm{mean}},P_{\mathrm{std}},\sigma_{\mathrm{data}})$ & \multicolumn{2}{c}{$(-1.2,1.2,0.5)$} & Log-normal noise schedule, EDM data scale \\
Batch size, epochs, patience & \multicolumn{2}{c}{$4096$, $1000$, $100$} & Per-iter Adam optimizer settings \\
\midrule
\multicolumn{4}{l}{\textit{Blockwise M-step}} \\
$\eta_{\mathrm{diff}}$ & $10^{-4}$ & $5\!\times\!10^{-5}$ & Adam learning rate (denoiser) \\
$\eta_{\mathrm{cal}}$ & $10^{-3}$ & $5\!\times\!10^{-4}$ & Adam learning rate (cutpoints) \\
$\alpha_{\mathrm{prior}}$ & \multicolumn{2}{c}{$5$} & Cutpoint prior strength (Eq.~\eqref{eq:cutpoint_init}) \\
\midrule
\multicolumn{4}{l}{\textit{E-step Heun-discretized Langevin sampler}} \\
$M$ & \multicolumn{2}{c}{$50$} & Reverse steps \\
$N_{\mathrm{refresh}}$ & $4$ & $1$ & Refresh cycles per reverse step \\
$\sigma_{\max}$ & $20$ & $5$ & Sampler maximum noise \\
$\sigma_{\min},\rho$ & \multicolumn{2}{c}{$0.002$, $7$} & Sampler grid \\
$S_{\mathrm{churn}},S_{\mathrm{noise}}$ & \multicolumn{2}{c}{$0$, $1$} & Langevin injection rate, Wiener-increment scale \\
\midrule
\multicolumn{4}{l}{\textit{Validated per-column ordinal temperature}} \\
$\rho_{\mathrm{val}}$, $\mathcal{T}$ & \multicolumn{2}{c}{$0.20$, $\{0.5,0.7,1.0,1.4,2.0\}$} & Validation-pool fraction, temperature grid \\
\bottomrule
\end{tabular}
\end{table}

\paragraph{Auxiliary loss weights.}
The triplet $(\lambda_{\mathrm{cont}},\lambda_{\mathrm{nom}},\lambda_{\mathrm{ord}})=(0.25,1.0,3.0)$ is selected by a one-pass hyperparameter grid search; an inner cross-validation would be a natural alternative. The smaller continuous weight reflects that continuous variables are already supervised directly by the EDM reconstruction objective in Eq.~\eqref{eq:diffusion_loss}, whereas nominal and ordinal variables require low-noise decoder supervision because good imputation depends not only on reconstructing an encoded coordinate but also on mapping that coordinate to a valid category or ordered interval. The width-parity multiplier $\lambda_j^{\mathrm{width}}=\max\{1,\lceil\log_2 K_j\rceil\}$ corrects the EDM-loss contribution of scalar-routed ordinal variables relative to fixed-width nominal encodings; the larger ordinal auxiliary weight $\lambda_{\mathrm{ord}}=3.0$ separately strengthens the cumulative-link decoder supervision and the cutpoint-sensitive ordinal likelihood during the M-step. Thus $\lambda_{\mathrm{ord}}$ should be viewed as a decoder-calibration stabilization weight, not as the mechanism correcting encoded-width imbalance.

\paragraph{Standardizer refresh policy.}
Diffusion-based tabular imputers typically standardize continuous features once at preprocessing and freeze the standardizer across iterations~\cite{zhang2023tabsyn}. TabSODA exposes an adaptive alternative, denoted \emph{each iter}, that recomputes the column-wise standardizer $T_\Sigma$ at the start of each EM outer iteration. The trade-off depends on the continuous-feature share of the dataset: when continuous columns dominate the standardizer's input space, adaptive refresh introduces harmful non-stationarity into the diffusion training distribution and hurts denoiser calibration; when categorical features dominate and continuous columns are sparse, adaptive refresh enables the model to track the evolving ordinal-latent distribution at minimal cost to continuous-channel calibration. We adopt a simple selection rule, computable from the dataset schema alone:
\[
  \text{standardizer refresh}
  =
  \begin{cases}
    \emph{once}, & \text{if continuous columns constitute }\geq 10\%\text{ of features,}\\
    \emph{each iter}, & \text{otherwise}.
  \end{cases}
\]
On our two benchmarks, PATH has $1/57 \approx 1.8\%$ continuous columns, well below the threshold, so we use \emph{each iter}; NSDUH has $6/29 \approx 20.7\%$ continuous columns, well above the threshold, so we use \emph{once}. Both datasets fall far from the $10\%$ boundary, and both empirically prefer the policy the rule selects.

\section{Datasets, Data Cleaning, and Artificial Masking}
\label{app:datasets}
\label{app:impl:masking}

\paragraph{PATH primary benchmark.}
The Population Assessment of Tobacco and Health (PATH) study \cite{icpsrpathpuf} is a nationally representative longitudinal cohort survey of tobacco-use behaviors, attitudes, and health outcomes among U.S.\ adults and youth, jointly funded by the National Institute on Drug Abuse (NIDA) at the National Institutes of Health (NIH) and the U.S.\ Food and Drug Administration's (FDA) Center for Tobacco Products, with public-use data distributed through the Inter-university Consortium for Political and Social Research (ICPSR). We use the Wave 1 Adult interview file and the data is publicly available at \href{https://www.icpsr.umich.edu/web/NAHDAP/studies/36498}{https://www.icpsr.umich.edu/web/NAHDAP/studies/36498}. The subset variables (1 continuous, 32 nominal, 24 ordinal), anchored on the cigarette, e-cigarette, cigar, pipe-tobacco, and smokeless-tobacco screener and follow-up sections (lifetime-use gates, current-frequency items, recency, daily quantity bins), augmented with nine same-domain ordinal smoking and age-band variables to densify the ordinal subset. Maximum number of categories per ordinal column is \(K_j\!\le\!9\); structural skips are concentrated in the cigarette/e-cigarette follow-up blocks gated by \texttt{AC1002}/\texttt{AE1001}.

\paragraph{NSDUH primary benchmark.}
The National Survey on Drug Use and Health (NSDUH) \cite{samhsansduhdata} is an annual nationally representative cross-sectional survey of U.S.\ residents aged 12 and older covering substance use, mental health, and related health behaviors, sponsored by the Substance Abuse and Mental Health Services Administration (SAMHSA) through its Center for Behavioral Health Statistics and Quality (CBHSQ). The data is publicly available at \href{https://www.samhsa.gov/data/data-we-collect/nsduh-national-survey-drug-use-and-health/datafiles}{https://www.samhsa.gov/data/data-we-collect/nsduh-national-survey-drug-use-and-health/datafiles} . The subset of  year 2021 data contains \(N=58{,}034\) respondents and \(d=29\) variables (6 continuous, 10 nominal, 13 ordinal) drawn from six domains for generalizability across health-policy applications: physical health, mental health (e.g., depression and anxiety screeners), substance-use frequency and recency, demographics, health-insurance coverage, and self-rated health and functional status. Ordinal columns are predominantly Likert-style frequency and severity scales with \(K_j\!\le\!7\).

\paragraph{Skip-rule miner evaluation set.}
Six smaller subsets drawn from the same two surveys are reserved exclusively for the codebook-reference evaluation of the forward-CART miner reported in Table~\ref{tab:miner_evaluation}: PATH-50, PATH-100, PATH-200 and NSDUH-50, NSDUH-100, NSDUH-200, where the suffix denotes the number of variables retained. These subsets share the same row dimensions as the corresponding primary benchmarks (\(N=32{,}320\) for PATH; \(N=58{,}034\) for NSDUH) but expose progressively wider variable cross-sections to stress-test the position-relaxed CART screen at different questionnaire breadths. They are not used in any end-to-end imputation experiment.

\paragraph{Data Cleaning.} Special codes (refused, don't know, missing-data, legitimate-skip) are recoded to \texttt{NA} in the imputation values file. The cleaning step does \emph{not} separate structural-skip codes from item-level nonresponse; the skip-pattern miner reconstructs that split. The codebook-provided skip indicator is extracted once and used either as the input mask $S$ in TabSODA or as the evaluation reference for Table~\ref{tab:miner_evaluation}.

\paragraph{Artificial masking generators.} For replication $\ell$, the artificial mask $A_{ij}^{(\ell)}$ is drawn from one of three mechanisms aligned with the DiffPuter family \cite{zhang2025diffputer} but operating on stable rank-based observed-value scores $s_{ij}\in[0,1]$ (numeric: empirical CDF rank; categorical: frequency rank divided by $K_j-1$). Variables are partitioned into a driver subset $\mathcal{D}$ ($\rho_{\mathrm{drv}}=0.30$) and target subset $\mathcal{V}$, fixed across MCAR/MAR/MNAR within a replication.
\begin{itemize}[leftmargin=2em,topsep=2pt,itemsep=2pt]
  \item \textbf{MCAR.} $A_{ij}^{(\ell)}\!\sim\!\mathrm{Bernoulli}(rR_{ij})$ on target columns, with $r=0.30$.
  \item \textbf{MAR.} $A_{ij}^{(\ell)}\!\sim\!\mathrm{Bernoulli}(R_{ij}\sigma(s_{i,\mathcal{D}}^\top\bm{\beta}_j+\alpha_j))$, with $\bm{\beta}_j\!\sim\!\mathcal{N}(0,I_{|\mathcal{D}|})$ rescaled to unit-variance logits and $\alpha_j$ fit by bisection so the marginal masking rate equals $r$. Logit depends only on never-masked driver scores, satisfying Rubin's MAR condition \cite{rubin1976,carpenter2021missing}.
  \item \textbf{MNAR.} Same logistic structure with (i) driver scores partially MCAR-zeroed at rate $r$ inside the mask generator and (ii) a target self-weight $w_j\!\sim\!\mathcal{N}(0,1)$ added so that $A_{ij}^{(\ell)}$ depends on the target value being hidden.
\end{itemize}
The same per-replication seed is shared across mechanisms, enabling paired comparisons.

\section{Evaluation Metric Formulas}
\label{app:metrics}

This appendix gives the formal definitions of the imputation and skip-miner metrics summarized in Section~\ref{sec:experiments:metrics}. All imputation metrics are computed only on artificially hidden, originally observed cells; structural-skip cells ($S_{ij}=1$) are excluded by construction.

\paragraph{Evaluation cells.}
For replication $\ell$, let
\begin{equation}
  \mathcal{E}^{(\ell)} = \{(i,j) : A_{ij}^{(\ell)}=1,\, R_{ij}=1\}
\end{equation}
denote the artificially hidden, originally observed cells, and let $\mathcal{E}_c^{(\ell)},\,\mathcal{E}_n^{(\ell)},\,\mathcal{E}_o^{(\ell)}$ be its restrictions to continuous, nominal, and ordinal columns ($j\in\mathcal{J}_c,\mathcal{J}_n,\mathcal{J}_o$), with $\mathcal{E}_{\mathrm{cat}}^{(\ell)}=\mathcal{E}_n^{(\ell)}\cup\mathcal{E}_o^{(\ell)}$.

\paragraph{Continuous variables (standardized MAE / RMSE).}
Continuous values are standardized with the per-column training mean $\mu_j$ and standard deviation $\sigma_j$, giving $\widetilde{x}_{ij}=(x_{ij}-\mu_j)/\sigma_j$ and the corresponding standardized prediction $\widetilde{\hat{x}}_{ij}=(\hat{x}_{ij}-\mu_j)/\sigma_j$. Per replication:
\begin{align}
  \mathrm{MAE}_{\mathrm{cont}}^{(\ell)} &= \frac{1}{|\mathcal{E}_c^{(\ell)}|}\!\!\sum_{(i,j)\in\mathcal{E}_c^{(\ell)}}\!\!|\widetilde{\hat{x}}_{ij}-\widetilde{x}_{ij}|, \\
  \mathrm{RMSE}_{\mathrm{cont}}^{(\ell)} &= \sqrt{\frac{1}{|\mathcal{E}_c^{(\ell)}|}\!\!\sum_{(i,j)\in\mathcal{E}_c^{(\ell)}}\!\!\bigl(\widetilde{\hat{x}}_{ij}-\widetilde{x}_{ij}\bigr)^{2}}.
\end{align}

\paragraph{Categorical accuracy (Cat./Nom./Ord. Acc.).}
With predicted categories $\hat{x}_{ij}\in\{1,\ldots,K_j\}$, the three reported accuracies differ only in the cell index set:
\begin{equation}
  \mathrm{ACC}_{\bullet}^{(\ell)} = \frac{1}{|\mathcal{E}_{\bullet}^{(\ell)}|}\sum_{(i,j)\in\mathcal{E}_{\bullet}^{(\ell)}}\mathbb{I}\{\hat{x}_{ij}=x_{ij}\},\qquad \bullet\in\{\mathrm{cat},\mathrm{nom},\mathrm{ord}\}.
\end{equation}

\paragraph{Ordinal MAE / MACE.}
Treating ordinal labels as integer ranks in $\{1,\ldots,K_j\}$,
\begin{equation}
  \mathrm{MACE}_{\mathrm{ord}}^{(\ell)} = \mathrm{MAE}_{\mathrm{ord}}^{(\ell)} = \frac{1}{|\mathcal{E}_o^{(\ell)}|}\sum_{(i,j)\in\mathcal{E}_o^{(\ell)}}|\hat{x}_{ij}-x_{ij}|;
\end{equation}
``MACE'' (mean absolute category error) and ``MAE'' are used interchangeably for ordinal variables across tables (the computation is identical). Intuitively, a MACE of $0$ corresponds to perfect ordinal prediction, while a MACE of $1$ means the prediction is on average one category off the true level. MACE is the standard ordinal-error measure preferred over plain accuracy because it credits near-miss predictions and penalizes large mis-rankings more heavily~\cite{baccianella2009ordinal}.

\paragraph{Aggregation across replications.}
Each per-replication metric $m^{(\ell)}$ is summarized as $\bar{m}\pm\mathrm{SE}(m)$ over the $L=50$ paired replications, where $\bar{m}=L^{-1}\sum_\ell m^{(\ell)}$ and $\mathrm{SE}(m)=\bigl(L(L-1)\bigr)^{-1/2}\bigl(\sum_\ell(m^{(\ell)}-\bar{m})^2\bigr)^{1/2}$.

\paragraph{Skip-rule miner metrics.}
The estimated mask $\widehat{S}\in\{0,1\}^{N\times d}$ is evaluated against the codebook-provided mask $S\in\{0,1\}^{N\times d}$ at the cell level. The cell-level confusion counts are
\begin{align}
  \mathrm{TP} &= \sum_{(i,j)}\mathbb{I}\{\widehat{S}_{ij}=1,\,S_{ij}=1\}, & \mathrm{FP} &= \sum_{(i,j)}\mathbb{I}\{\widehat{S}_{ij}=1,\,S_{ij}=0\},\\
  \mathrm{FN} &= \sum_{(i,j)}\mathbb{I}\{\widehat{S}_{ij}=0,\,S_{ij}=1\}, & \mathrm{TN} &= \sum_{(i,j)}\mathbb{I}\{\widehat{S}_{ij}=0,\,S_{ij}=0\}.
\end{align}
The reported audit metrics are
\begin{equation}
  \mathrm{Precision}=\frac{\mathrm{TP}}{\mathrm{TP}+\mathrm{FP}},\quad \mathrm{Recall}=\frac{\mathrm{TP}}{\mathrm{TP}+\mathrm{FN}},\quad F_{1}=\frac{2\,\mathrm{Precision}\cdot\mathrm{Recall}}{\mathrm{Precision}+\mathrm{Recall}},
\end{equation}
\begin{equation}
  \mathrm{FPR}=\frac{\mathrm{FP}}{\mathrm{FP}+\mathrm{TN}},\quad \mathrm{Accuracy}=\frac{\mathrm{TP}+\mathrm{TN}}{\mathrm{TP}+\mathrm{FP}+\mathrm{TN}+\mathrm{FN}}.
\end{equation}
$\widehat{S}$ is fitted from raw missingness and questionnaire order only; the codebook-provided mask $S$ is loaded after learning is complete and is used solely for these audit metrics.

\section{Additional Results: MCAR and MNAR Tables}
\label{app:additional_results}

This appendix collects the imputation benchmark (Tables~\ref{tab:benchmarks_mcar_combined} and~\ref{tab:benchmarks_mnar_combined}) and the skip-channel ablation (Tables~\ref{tab:mcar_ablation} and~\ref{tab:mnar_ablation}) under MCAR and MNAR masking. The qualitative ranking matches the MAR results in the main text: TabSODA attains the best Ord.\ MACE and Ord.\ Acc.\ on both datasets across both mechanisms, the best Cat.\ Acc.\ and Nom.\ Acc.\ in most settings, and the best Num.\ RMSE on NSDUH; PATH Num.\ RMSE is occasionally won by DiffPuter or TabDiff.

\begin{table}[h]
\centering
\caption{Imputation performance on PATH and NSDUH under $30\%$ MCAR (mean $\pm$ SD).}
\label{tab:benchmarks_mcar_combined}
\small
\resizebox{\textwidth}{!}{%
\begin{tabular}{llccccc}
\toprule
Dataset & Method
& Ord.\ MACE \(\downarrow\)
& Ord.\ Acc.\ \(\uparrow\)
& Cat.\ Acc.\ \(\uparrow\)
& Nom.\ Acc.\ \(\uparrow\)
& Num.\ RMSE \(\downarrow\) \\
\midrule
\multirow{7}{*}{PATH}
& MICE        & \(0.737 \pm 0.080\) & \(0.574 \pm 0.034\) & \(0.754 \pm 0.019\) & \(0.827 \pm 0.011\) & \(15.287 \pm 3.062\) \\
& MissForest  & \(0.584 \pm 0.072\) & \(0.664 \pm 0.030\) & \(0.816 \pm 0.016\) & \(0.878 \pm 0.009\) & \(15.562 \pm 3.177\) \\
& TabCSDI  & \(1.071 \pm 0.108\) & \(0.463 \pm 0.038\) & \(0.679 \pm 0.020\) & \(0.766 \pm 0.016\) & \(36.413 \pm 23.811\) \\
& DiffPuter   & \(0.656 \pm 0.107\) & \(0.634 \pm 0.042\) & \(0.801 \pm 0.037\) & \(0.869 \pm 0.036\) & \(\mathbf{15.010 \pm 2.602}\) \\
& TabSyn      & \(0.712 \pm 0.123\) & \(0.568 \pm 0.056\) & \(0.786 \pm 0.040\) & \(0.882 \pm 0.036\) & \(23.767 \pm 6.752\) \\
& TabDiff     & \(0.790 \pm 0.119\) & \(0.553 \pm 0.044\) & \(0.772 \pm 0.026\) & \(0.853 \pm 0.008\) & \(15.333 \pm 4.491\) \\
& TabSODA     & \cellcolor{lightgreen}\(\mathbf{0.452 \pm 0.063}\) & \cellcolor{lightgreen}\(\mathbf{0.703 \pm 0.045}\) & \cellcolor{lightgreen}\(\mathbf{0.900 \pm 0.022}\) & \cellcolor{lightgreen}\(\mathbf{0.983 \pm 0.007}\) & \(15.440 \pm 3.268\) \\
\addlinespace
\multirow{7}{*}{NSDUH}
& MICE        & \(0.797 \pm 0.149\) & \(0.586 \pm 0.054\) & \(0.499 \pm 0.037\) & \(0.396 \pm 0.061\) & \(55.200 \pm 15.684\) \\
& MissForest  & \(0.797 \pm 0.149\) & \(0.586 \pm 0.054\) & \(0.499 \pm 0.037\) & \(0.396 \pm 0.061\) & \(53.415 \pm 15.430\) \\
& TabCSDI  & \(0.437 \pm 0.060\) & \(0.685 \pm 0.040\) & \(0.689 \pm 0.033\) & \(0.695 \pm 0.051\) & \(61.857 \pm 16.716\) \\
& DiffPuter   & \(0.424 \pm 0.043\) & \(0.677 \pm 0.033\) & \(0.709 \pm 0.037\) & \(\mathbf{0.757 \pm 0.064}\) & \(55.301 \pm 21.712\) \\
& TabSyn      & \(0.413 \pm 0.012\) & \(0.684 \pm 0.011\) & \(0.693 \pm 0.025\) & \(0.704 \pm 0.046\) & \(\mathbf{42.380 \pm 34.830}\) \\
& TabDiff     & \(0.439 \pm 0.018\) & \(0.678 \pm 0.019\) & \(0.705 \pm 0.028\) & \(0.741 \pm 0.043\) & \(63.739 \pm 28.388\) \\
& TabSODA     & \cellcolor{lightgreen}\(\mathbf{0.390 \pm 0.040}\) & \cellcolor{lightgreen}\(\mathbf{0.693 \pm 0.035}\) & \cellcolor{lightgreen}\(\mathbf{0.716 \pm 0.034}\) & \(0.748 \pm 0.045\) & \(49.609 \pm 18.022\) \\
\bottomrule
\end{tabular}%
}
\end{table}

\begin{table}[h]
\centering
\caption{Imputation performance on PATH and NSDUH under $30\%$ MNAR (mean $\pm$ SD).}
\label{tab:benchmarks_mnar_combined}
\small
\resizebox{\textwidth}{!}{%
\begin{tabular}{llccccc}
\toprule
Dataset & Method
& Ord.\ MACE \(\downarrow\)
& Ord.\ Acc.\ \(\uparrow\)
& Cat.\ Acc.\ \(\uparrow\)
& Nom.\ Acc.\ \(\uparrow\)
& Num.\ RMSE \(\downarrow\) \\
\midrule
\multirow{7}{*}{PATH}
& MICE        & \(0.786 \pm 0.100\) & \(0.550 \pm 0.037\) & \(0.721 \pm 0.023\) & \(0.796 \pm 0.023\) & \(15.428 \pm 4.205\) \\
& MissForest  & \(0.611 \pm 0.088\) & \(0.648 \pm 0.035\) & \(0.789 \pm 0.021\) & \(0.850 \pm 0.021\) & \(16.241 \pm 3.951\) \\
& TabCSDI  & \(1.248 \pm 0.228\) & \(0.420 \pm 0.065\) & \(0.620 \pm 0.032\) & \(0.709 \pm 0.037\) & \(34.697 \pm 32.119\) \\
& DiffPuter   & \(0.724 \pm 0.161\) & \(0.609 \pm 0.059\) & \(0.764 \pm 0.051\) & \(0.834 \pm 0.050\) & \(\mathbf{14.895 \pm 4.806}\) \\
& TabSyn      & \(0.618 \pm 0.093\) & \(0.596 \pm 0.046\) & \(0.773 \pm 0.016\) & \(0.842 \pm 0.027\) & \(18.053 \pm 1.588\) \\
& TabDiff     & \(0.732 \pm 0.108\) & \(0.562 \pm 0.046\) & \(0.730 \pm 0.033\) & \(0.799 \pm 0.035\) & \(17.691 \pm 2.455\) \\
& TabSODA     & \cellcolor{lightgreen}\(\mathbf{0.433 \pm 0.064}\) & \cellcolor{lightgreen}\(\mathbf{0.706 \pm 0.036}\) & \cellcolor{lightgreen}\(\mathbf{0.901 \pm 0.017}\) & \cellcolor{lightgreen}\(\mathbf{0.981 \pm 0.009}\) & \(16.453 \pm 4.586\) \\
\addlinespace
\multirow{7}{*}{NSDUH}
& MICE        & \(0.824 \pm 0.171\) & \(0.578 \pm 0.072\) & \(0.505 \pm 0.053\) & \(0.399 \pm 0.081\) & \(62.851 \pm 19.202\) \\
& MissForest  & \(0.824 \pm 0.171\) & \(0.578 \pm 0.072\) & \(0.505 \pm 0.053\) & \(0.399 \pm 0.081\) & \(60.829 \pm 19.021\) \\
& TabCSDI  & \(0.430 \pm 0.096\) & \(0.686 \pm 0.064\) & \(0.680 \pm 0.057\) & \(0.675 \pm 0.071\) & \(61.723 \pm 23.300\) \\
& DiffPuter   & \(0.451 \pm 0.053\) & \(0.656 \pm 0.030\) & \(0.690 \pm 0.025\) & \(\mathbf{0.739 \pm 0.036}\) & \(58.711 \pm 20.712\) \\
& TabSyn      & \(0.417 \pm 0.081\) & \(0.679 \pm 0.054\) & \(0.691 \pm 0.045\) & \(0.716 \pm 0.057\) & \(57.712 \pm 20.013\) \\
& TabDiff     & \(0.433 \pm 0.085\) & \(0.673 \pm 0.058\) & \(0.682 \pm 0.047\) & \(0.701 \pm 0.052\) & \(61.736 \pm 18.075\) \\
& TabSODA     & \cellcolor{lightgreen}\(\mathbf{0.367 \pm 0.085}\) & \cellcolor{lightgreen}\(\mathbf{0.709 \pm 0.057}\) & \cellcolor{lightgreen}\(\mathbf{0.720 \pm 0.049}\) & \cellcolor{lightgreen}\(\mathbf{0.739 \pm 0.059}\) & \cellcolor{lightgreen}\(\mathbf{54.480 \pm 20.798}\) \\
\bottomrule
\end{tabular}%
}
\end{table}

\begin{table}[h]
\centering
\caption{Ablation under $30\%$ MCAR (mean $\pm$ SD): TabSODA with codebook-provided skip mask, TabSODA\,+\,SKIP with estimated mask $\widehat{S}$, and TabSODA-N (analog-bit encoding for all categoricals).}
\label{tab:mcar_ablation}
\small
\resizebox{\textwidth}{!}{%
\begin{tabular}{llccccc}
\toprule
Dataset & Method
& Ord.\ MACE \(\downarrow\)
& Ord.\ Acc.\ \(\uparrow\)
& Cat.\ Acc.\ \(\uparrow\)
& Nom.\ Acc.\ \(\uparrow\)
& Num.\ RMSE \(\downarrow\) \\
\midrule
\multirow{3}{*}{PATH}
& TabSODA-N        & \(0.479 \pm 0.087\) & \(\mathbf{0.710 \pm 0.045}\) & \(\mathbf{0.902 \pm 0.021}\) & \(0.983 \pm 0.007\) & \(15.507 \pm 3.242\) \\
& TabSODA\,+\,SKIP & \(0.471 \pm 0.064\) & \(0.691 \pm 0.042\) & \(0.896 \pm 0.021\) & \(0.982 \pm 0.006\) & \(\mathbf{15.423 \pm 3.236}\) \\
& TabSODA          & \cellcolor{lightgreen}\(\mathbf{0.452 \pm 0.063}\) & \(0.703 \pm 0.045\) & \(0.900 \pm 0.022\) & \cellcolor{lightgreen}\(\mathbf{0.983 \pm 0.007}\) & \(15.440 \pm 3.268\) \\
\addlinespace
\multirow{3}{*}{NSDUH}
& TabSODA-N        & \(0.418 \pm 0.046\) & \(0.691 \pm 0.034\) & \(0.708 \pm 0.034\) & \(0.732 \pm 0.046\) & \(50.543 \pm 18.020\) \\
& TabSODA\,+\,SKIP & \(0.548 \pm 0.089\) & \(0.671 \pm 0.033\) & \(0.716 \pm 0.031\) & \(\mathbf{0.847 \pm 0.036}\) & \(49.923 \pm 18.074\) \\
& TabSODA          & \cellcolor{lightgreen}\(\mathbf{0.390 \pm 0.040}\) & \cellcolor{lightgreen}\(\mathbf{0.693 \pm 0.035}\) & \cellcolor{lightgreen}\(\mathbf{0.716 \pm 0.034}\) & \(0.748 \pm 0.045\) & \cellcolor{lightgreen}\(\mathbf{49.609 \pm 18.022}\) \\
\bottomrule
\end{tabular}%
}
\end{table}

\begin{table}[h]
\centering
\caption{Ablation under $30\%$ MNAR (mean $\pm$ SD): TabSODA with codebook-provided skip mask, TabSODA\,+\,SKIP with estimated mask $\widehat{S}$, and TabSODA-N (analog-bit encoding for all categoricals).}
\label{tab:mnar_ablation}
\small
\resizebox{\textwidth}{!}{%
\begin{tabular}{llccccc}
\toprule
Dataset & Method
& Ord.\ MACE \(\downarrow\)
& Ord.\ Acc.\ \(\uparrow\)
& Cat.\ Acc.\ \(\uparrow\)
& Nom.\ Acc.\ \(\uparrow\)
& Num.\ RMSE \(\downarrow\) \\
\midrule
\multirow{3}{*}{PATH}
& TabSODA-N        & \(\mathbf{0.427 \pm 0.085}\) & \(\mathbf{0.734 \pm 0.035}\) & \(\mathbf{0.908 \pm 0.017}\) & \(0.981 \pm 0.009\) & \(16.621 \pm 4.579\) \\
& TabSODA\,+\,SKIP & \(0.438 \pm 0.058\) & \(0.704 \pm 0.036\) & \(0.898 \pm 0.018\) & \(0.979 \pm 0.010\) & \(16.501 \pm 4.554\) \\
& TabSODA          & \(0.433 \pm 0.064\) & \(0.706 \pm 0.036\) & \(0.901 \pm 0.017\) & \cellcolor{lightgreen}\(\mathbf{0.981 \pm 0.009}\) & \cellcolor{lightgreen}\(\mathbf{16.453 \pm 4.586}\) \\
\addlinespace
\multirow{3}{*}{NSDUH}
& TabSODA-N        & \(0.380 \pm 0.085\) & \(\mathbf{0.714 \pm 0.058}\) & \(0.719 \pm 0.047\) & \(0.732 \pm 0.054\) & \(54.648 \pm 20.426\) \\
& TabSODA\,+\,SKIP & \(0.509 \pm 0.110\) & \(0.684 \pm 0.053\) & \(\mathbf{0.720 \pm 0.049}\) & \(\mathbf{0.825 \pm 0.067}\) & \(55.117 \pm 21.918\) \\
& TabSODA          & \cellcolor{lightgreen}\(\mathbf{0.367 \pm 0.085}\) & \(0.709 \pm 0.057\) & \(0.720 \pm 0.049\) & \(0.739 \pm 0.059\) & \cellcolor{lightgreen}\(\mathbf{54.480 \pm 20.798}\) \\
\bottomrule
\end{tabular}%
}
\end{table}

\newpage

\section{Proof of Cutpoint Ordering Guarantee}
\label{app:cutpoint_proof}

\begin{proposition}
The softplus-reparameterized cutpoints satisfy $b_{j,1}<b_{j,2}<\cdots<b_{j,K_j-1}$ for any $\alpha_j\in\R$, $\delta_{j,k}\in\R$.
\end{proposition}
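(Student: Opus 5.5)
The plan is to show that each consecutive gap $b_{j,k}-b_{j,k-1}$ is strictly positive, using only the recursive definition from the softplus reparameterization in Appendix~\ref{app:ord_loss}: $b_{j,1}=\alpha_j$ and $b_{j,k}=b_{j,k-1}+\mathrm{softplus}(\delta_{j,k})+\epsilon_b$ for $k\ge 2$, with $\epsilon_b=10^{-4}>0$. Strict monotonicity of the whole finite chain then follows by transitivity.

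First I would check that every $b_{j,k}$ is a well-defined finite real number for arbitrary $\alpha_j,\delta_{j,k}\in\R$. This follows by induction on $k$, since $\mathrm{softplus}(x)=\log(1+e^x)$ is finite for every real $x$.

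Second, I would establish the key fact that $\mathrm{softplus}(x)>0$ for all $x\in\R$. Indeed $e^x>0$, so $1+e^x>1$, and $\log$ is strictly increasing with $\log 1=0$. Hence for each $k=2,\dots,K_j-1$,
\[
b_{j,k}-b_{j,k-1}=\mathrm{softplus}(\delta_{j,k})+\epsilon_b>\epsilon_b>0 .
\]
The additive $\epsilon_b$ alone would already suffice. The argument remains valid even if softplus values underflow to $0$ in floating point, which is the practical purpose of including $\epsilon_b$. This strict positivity is the only real content of the proof.

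Third, I would chain the consecutive inequalities $b_{j,1}<b_{j,2}<\cdots<b_{j,K_j-1}$ by induction on $k$. I would also note the quantitative strengthening that $b_{j,k}-b_{j,l}\ge (k-l)\epsilon_b$ for $k>l$. The boundary conventions $b_{j,0}=-\infty$ and $b_{j,K_j}=+\infty$ extend the chain to the full ordered cutpoint sequence used in Eq.~\eqref{eq:probit_decoder}, so every interval $(b_{j,k-1},b_{j,k}]$ is nonempty.

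No step is a genuine obstacle; the only care needed is the trivial case $K_j\le 2$, where the chain has at most one interior cutpoint and the claim holds vacuously.
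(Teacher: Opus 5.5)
Your proposal is correct and matches the paper's proof: each consecutive gap equals $\mathrm{softplus}(\delta_{j,k})+\epsilon_b>0$, and the strict ordering of the chain follows by induction. Your additional remarks are harmless extras that the paper's argument does not need: finiteness of each $b_{j,k}$, the quantitative bound $b_{j,k}-b_{j,l}\ge(k-l)\epsilon_b$, and the vacuous case $K_j\le 2$.
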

\begin{proof}
$b_{j,1}=\alpha_j$. For $k\geq 2$: $b_{j,k}=b_{j,k-1}+\mathrm{softplus}(\delta_{j,k})+\epsilon_b > b_{j,k-1}$ since both $\mathrm{softplus}(\cdot)>0$ and $\epsilon_b>0$. The strict inequality propagates by induction.
\end{proof}

\end{document}